\PassOptionsToPackage{colorlinks=true, citecolor=green!60!black, linkcolor=blue}{hyperref}

\documentclass[11pt]{article}

\usepackage[utf8]{inputenc} 
\usepackage[T1]{fontenc}    
\usepackage{url}            
\usepackage{hyperref}
\usepackage{booktabs}       
\usepackage{amsfonts}       
\usepackage{nicefrac}       
\usepackage{microtype}      

\usepackage{bbm,amsmath,amssymb,graphicx,url,amsthm,dsfont}
\usepackage[ruled]{algorithm2e}
\usepackage{xcolor}
\usepackage{comment}
\usepackage{fullpage}

\newtheorem{theorem}{Theorem}[section]
\newtheorem{remark}[theorem]{Remark}

\newtheorem{claim}[theorem]{Claim}

\newtheorem{example}[theorem]{Example}
\newtheorem{lemma}[theorem]{Lemma}
\newtheorem{corollary}[theorem]{Corollary}

\DeclareMathOperator*{\argmin}{argmin}
\DeclareMathOperator*{\argmax}{argmax}

\DeclareMathOperator*{\E}{\mathbb{E}}
\DeclareMathOperator*{\reals}{\mathbb{R}}

\DeclareMathOperator{\spn}{span}

\newcommand{\cU}{\mathcal{U}}

\newcommand{\cD}{\mathcal{D}}

\newcommand{\C}{\mathcal{C}}

\newcommand{\cV}{\mathcal{V}}

\newcommand{\barx}{\bar{x}}
\newcommand{\bary}{\bar{y}}
\newcommand{\hatb}{\hat{\beta}}
\newcommand{\barX}{\bar{X}}
\newcommand{\barY}{\bar{Y}}

\title{Gaming Helps! \\ Learning from Strategic Interactions in Natural Dynamics}

    \author{
    Yahav Bechavod\thanks{School of Computer Science and Engineering, The Hebrew University. Email: \texttt{yahav.bechavod@cs.huji.ac.il}.}
    \and
    Katrina Ligett\thanks{School of Computer Science and Engineering, The Hebrew University. Email: \texttt{katrina@cs.huji.ac.il}.}
    \and
    Zhiwei Steven Wu\thanks{School of Computer Science, Carnegie Mellon University. Email: \texttt{zstevenwu@cmu.edu}.}
    \and
    Juba Ziani\thanks{Warren Center for Network and Data Sciences, University of Pennsylvania. Email: \texttt{jziani@seas.upenn.edu}.}
    }

\begin{document}

\maketitle

\begin{abstract}
We consider an online regression setting in which individuals adapt to the regression model: arriving individuals are aware of the current model, and invest strategically in modifying their own features so as to improve the predicted score that the current model assigns to them. Such feature manipulation has been observed in various scenarios---from credit assessment to school admissions---posing a challenge for the learner.  Surprisingly, we find that such strategic manipulations may in fact help the learner recover the meaningful variables---that is, the features that, when changed, affect the true label (as opposed to non-meaningful features that have no effect). We show that even simple behavior on the learner's part allows her to simultaneously i) accurately recover the meaningful features, and ii) incentivize agents to invest in these meaningful features, providing incentives for improvement.
\end{abstract}

\newpage

\section{Introduction}

As algorithmic decision-making takes a more and more important role in myriad application domains, incentives emerge to change the inputs presented to these algorithms. 
Recently, a collection of very interesting papers has explored various models of strategic behavior on the part of the classified individuals in learning settings, and ways to mitigate the harms to accuracy that can arise from falsified features~\cite{dalvi2004,bruckner2012,Hardt15,Dong17}. Additionally, some recent work has focused on the design of learning algorithms that incentivize the classified individuals to make ``good'' investments in true changes to their variables~\cite{Kleinberg18}.

The present paper takes a different tack, and explores another potential effect of strategic investment in true changes to variables, in an online learning setting: we claim that interaction between the online learning and the strategic individuals may actually aid the learning algorithm in identifying \emph{meaningful} variables. By meaningful, we mean, informally, and within the context of this paper, variables for which changing their true value affects the true label and thus, may lead agents to improve. In contrast, \emph{non-meaningful} variables do not affect the true label; such features are susceptible to gaming, as they can potentially be used to obtain better outcomes with respect to the posted model without actually improving true labels.

The idea is quite simple. First, if a learning algorithm's  hypothesis at a particular round depends heavily on a certain variable, this incentivizes the arriving individual to invest in improving that variable. If that variable were meaningful (that is, it has an effect on the true label), then the learner would observe an 
improved true label, increasing the observed correlation between the variable and the label. However, if that variable were non-meaningful, the changes would not have an effect on the true label, reducing the observed correlation between the variable and the label. Second, if a learning algorithm improves its hypotheses over time, this changing sequence of incentives should encourage investment in a variety of promising variables, exposing those that are meaningful. This process should naturally induce the learner to shift its dependence towards meaningful variables, thereby incentivizing individuals to invest in improving as opposed to gaming, resulting in an overall higher-quality population.

The goal of this paper is to highlight this potential beneficial effect of the interaction between online learning and strategic modification. To do so, we choose to focus our study on a simple linear regression setting. In our model, there is a true underlying latent regression parameter vector $\beta^*$, and there is an underlying distribution over unmodified feature vectors. On every round $t$, the learner must announce a regression vector $\hat{\beta}_t$.\footnote{Eventually, the learner we will consider does not update its regression vector at every round, but rather periodically, so that individuals can be treated in batches.} An individual then appears, with an unmodified feature vector $x_t$ chosen i.i.d.~from the distribution. Before presenting himself to the learner, the individual observes $\hat{\beta}_t$ and has the opportunity to invest in changing his true features to some $\bar{x}_t$; we focus on a simple model wherein the individual's investment results in a targeted change to a single variable. The individual then receives utility $\langle \hatb^t , \bar{x}_t \rangle$, and the learner gets feedback $\bary_t = \beta^*{^\intercal} \barx_t + \varepsilon_t$, where $\varepsilon_t$ is some noise. 

Within this simple model, we consider simple behaviors for both the learner and the individuals: At each time $t$, the individual modifies his features so as to maximize his utility given the posted $\hat{\beta}_t$; periodically, the learner updates $\hat{\beta}_t$ with her best estimate of $\beta^*$ given the (modified) features and labels she has observed, via least-square regression. Our main result is that under this simple behavior, the learner recovers $\beta^*$ accurately, after observing sufficiently many individuals. Our result is divided in two parts: first, we show that least-square regression accurately recovers $\beta^*$ with respect to features that many individuals have invested in. Second, we show that these dynamics incentivize investments in every feature, leading to accurate recovery of $\beta^*$ in its entirety, under an assumption on how the learner breaks ties between multiple least-square solutions. Our accuracy guarantees for a feature improve with the number of times that feature is invested in.

It is important to emphasize that we focus on a setting in which individuals' modifications (which we refer to interchangeably as ``manipulations'') of their variables can be true investments (e.g., studying to achieve better mastery of material before an exam---the exam score is the variable and the mastery level is the label) rather than deceitful manipulations (e.g., cheating on the exam to achieve a higher score without improving mastery). Deceitful manipulations would not help to expose meaningful variables, because such changes would never affect the true label (subject mastery), regardless of whether the manipulations were in meaningful or non-meaningful variables.

Notice that any discovery of meaningful variables that occurs in our model is a result of the \emph{interaction} between the online learner and the strategic individuals. On the one hand, online learning with no strategic response has no ability to distinguish non-meaningful variables from meaningful ones when the two are correlated. On the other hand, if strategic individuals faced with a static scoring algorithm tried to maximize their scores by investing in a non-meaningful feature, the resulting information would be insufficient for an observer to draw conclusions about whether other features are meaningful or not.

For example, historical data might show that both a student's grades in high school and the make of car his parents drive to the university visit day are predictive of success in university. Suppose, for simplicity, that success in high school is causally related to success in university, but that make of parents' car is not, and is merely a \emph{proxy} for other features that control one's chances of success in college. 

If the university admissions process put large weight on high school grades, that would incentivize students to invest effort in performing well in high school, which would also observably pay off in university, which would reinforce the emphasis on high school grades. If the admissions process put large weight on the make of car in which students arrive to the visit day, that would incentivize renting fancy cars for visits. However, this would result in a different distribution over the observed student variables, and on this modified distribution the correlation between cars and university success would be \emph{weakened}, and therefore the admissions formula would not perform well. In future years, the university would naturally correct the formula to de-emphasize cars.

It is important to note that our work operates under a simplifying assumption with regards to the underlying structure of the problem (introduced in Section \ref{sec:model}). Adding an assumption of this kind is necessary since in the general case recovering the exact model structure is hard. 
Our work thus aims to bring attention to a natural mechanism, based on re-training, for exposing meaningful variables, that we believe is worthy of further attention.
\section{Related Work}

Much of the work on learning assumes that an individual's data is a fixed input that is independent of the algorithm used by the decision-maker. In practice, however, individuals may try to adapt to the model in place in order to improve their outcomes. A recent line of work studies such strategic behavior in classification settings.  Part of this line of work concerns itself with the negative consequences of strategic behavior, when individuals aim to game the model in place; for example, individuals may manipulate their data or features (often at a cost) in an effort to obtain positive qualification outcomes or otherwise manipulate an algorithm's output~\cite{dalvi2004,PP04,DFP10,bruckner2012,IL13,HIM14,CDP15,Hardt15,Dong17,chara1,chara2} or even to protect their privacy~\cite{GLRS14,CIL15}. The goal in these results is to provide algorithms whose outputs are robust to such gaming. \cite{milli2019} and~\cite{lily2019} focus on the social impact of robust classification, and show that i) robust classifiers come at a social cost (by forcing even qualified individuals to invest in costly feature manipulations in order to be classified positively) and ii) disparate abilities to game the model inevitably lead to unfair outcomes. 

Another part of this line of work instead sees strategic manipulation as possibly positive, when the classifier incentivizes individuals to invest in true improvements to their features---e.g., a student may decide to study and actually improve his knowledge in order to raise his test score.~\cite{Kleinberg18},~\cite{berk2019},~\cite{planck1},~\cite{planck2} and~\cite{nika} study how to incentivize agents to invest effort in modifying meaningful features that improve their labels. 
Much of this line of work assumes that the decision-maker already understands which features are meaningful and affect agents' labels or outcomes, and which do not. 

In contrast, we consider a setting where the decision-maker does not initially know which features affect agents' labels, and aims to leverage the agents' strategic behavior to expose what features these are. Most closely related to this paper is the work of~\cite{miller2019}, as well as the concurrent works of~\cite{moritz_performative} and~\cite{yo}.~\cite{miller2019} formalize the distinction between gaming and actual improvements by drawing a connection to causality and introducing causal graphs that model the effects of the features and target variables on each other. They show that in such settings, the decision-maker should incentivize actual improvements rather than gaming, and that designing good incentives that push agents to improve is at least as hard as causal inference. \cite{yo} study the sample complexity of learning a linear regression model so as to either i) maximize the accuracy of the predictions, ii) maximize the agents' self-improvements, or iii) recover the causality structure of their problem.~\cite{moritz_performative} show how re-training can lead to stable and optimal outcomes when the learner's model affects the distribution of agent features and labels; while our paper considers a similar re-training framework, our assumptions differ from those of~\cite{moritz_performative}.
\section{Model} \label{sec:model}

We consider a linear regression setting where the learner estimates the regression parameters based on strategically manipulated data from a sequence of agents over rounds. There is a true latent regression parameter $\beta^* \in [-1,1]^d$ that \emph{generates} an agent's label as a function of his feature vector. 
That is, for any agent with feature vector $x \in [-1,1]^d$, the real-valued label $y$ is obtained via $y = \beta^{*\top} x + \varepsilon$, where $\varepsilon$ is a noise random variable with $|\varepsilon| \leq \sigma$, and $\E [\varepsilon \mid x] = 0$. We also refer to an individual's features as variables. There is a distribution over the \emph{unmodified} features $x$ in $[-1 , 1]^d$; we let $\mu$ be the mean and $\Sigma$ be the covariance matrix of this distribution; we note that the distribution of unmodified features may be degenerate, i.e., $\Sigma$ may not be full-rank. For example, this can happen in settings in which the non-meaningful features are merely proxies for the meaningful features (i.e., those that really control the label); in that case, one may imagine that the non-meaningful features are (possibly randomized) functions of the meaningful features, leading in particular to low-rank observations when few features are meaningful.

Throughout the paper, we set $\mu = 0$.\footnote{This can be done whenever the learner can estimate the mean feature vector, since the learner can then center the features. The learner could estimate the mean by using unlabeled historical data; for example, she could collect data during a period when the algorithm does not make any decision on the agents, thus they would have no incentive to modify their features.}

The agents and the learner interact in an online fashion. At time $t$, the learner first posts a regression estimate $\hatb^t\in \reals^d$, then an agent (indexed by $t$) arrives with their unmodified feature vector $x_t$. Agent $t$ modifies the feature $x_t$ into $\barx_t$ in response to $\hatb^t$, in order to improve their assigned score $\langle \hatb^t , \bar{x}_t \rangle$. Finally, the learner observes the agent's realized label after feature modification, given by $\bary_t = \beta^*{^\intercal} \barx_t + \varepsilon_t$. 

\paragraph{Meaningful vs non-meaningful features.} 
When an agent modifies a feature $k$, this may also affect the agent's true label. We divide the coordinates of any feature vector $x$ into \emph{meaningful} and \emph{non-meaningful} features; meaningful features inform and control an agent's label, while non-meaningful features are those that can be manipulated without directly affecting an agent's label. (One can think, intuitively, of the meaningful features as causal, and the non-meaningful features as non-causal, but the language of causality is typically reserved for more complex settings than ours.) Formally, for any $k \in [d]$, feature $k$ is meaningful if and only if the coordinate $\beta^*(k) \neq 0$, and non-meaningful if and only if $\beta^*(k) = 0$. An agent $t$ can modify his true label by modifying meaningful features. As such, note that $\beta^*$ captures the underlying model structure of our problem. The magnitude of each feature in $\beta^*$ captures the extent to which said feature is meaningful and affects the agents' labels.

We remark that strategic agents---that best-respond to the learner's model to improve their regression outcomes---may at times have incentives to manipulate a feature $k$ such that $\beta^*(k) = 0$; this can happen when the learner sets $\hat{\beta}(k) \neq 0$. In such cases, agents can improve their regression outcomes \emph{without} improving their true label, which we refer to as \emph{gaming}. When agents modify a feature $k$ that aligns with the true model, we refer to such a modification as an \emph{improvement}.

\paragraph{Agents' responses.} Agents are \emph{strategic}: they modify their features so as to maximize their own regression outcome;\footnote{Importantly, our agents' goal \emph{is not} to cooperate with the learner. Agents are self-interested and aim to maximize their own regression outcomes; they do not actively seek to help the learner improve the accuracy of her model. The agents prefer when the learner emphasizes features that are easier to manipulate, even if said features are non-meaningful. These incentives may be ill-aligned with the learner's goal of optimizing predictive power and recovering model structure, which requires putting more weight on meaningful features.} modifications are costly and agents are budgeted. We assume agent $t$ incurs a linear cost $c_t(\Delta_t) = \sum_{k = 1}^d c_t(k) \left\vert \Delta_t(k) \right\vert$ to change his features by $\Delta_t$, and has a total budget of $B_t$ to modify his features. $\left(\{c_t(k)\}_{k \in [d]},B_t\right)$'s are drawn i.i.d. from a distribution $\mathcal{C}$ that is unknown to the learner. We assume $\mathcal{C}$ has discrete support $\{\left(c^1,B^1\right),\ldots, \left(c^l,B^l\right)\}$, and we denote by $\pi^i$ the probability that $\left(c_t,B_t\right) = \left(c^i,B^i\right)$. We assume $c^i(k) > 0,~B^i > 0$ for all $i \in [l],~k\in[d]$; that is, every agent can modify his features, but no feature can be modified for free.\footnote{In our model, modifying a feature affects only that feature and the label, but does not affect the values of any other features. We leave exploration of more complex models of feature intervention to future work.} When facing regression parameters $\hatb$, agent $t$ solves 
\begin{align*}
M(\hatb,c_t,B_t) = &\argmax_{\Delta_t}~~\hatb^\top \left(x_t + \Delta_t \right)\\ &\text{s.t.} \sum_{k=1}^d c_t(k)  \left\vert \Delta_t(k) \right\vert \leq B_t;
\end{align*}
That is, agent $t$ strategically aims to maximize his predicted outcome given a budget of $B$ for modifying his features, when facing model $\hatb$. The solution of the above program does not depend on $x_t$, only on $\hatb$ and $(c_t,B_t)$, and is given by
\[
\Delta_t = \sum_{k = 1}^d sgn\left(\hatb(k)\right)  \mathds{1} \left\{k = \argmax_j \left\vert\hatb(j)\right\vert/c_t(j)\right\} \frac{B_t}{c_t(k)},
\]
up to tie-breaking; when several features maximize $\vert\hatb(j)\vert/c_t(j)$, the agent modifies a single one of these features. We call $D_\tau$ the set of features that have been modified by at least one agent $t \in [\tau]$.

\begin{remark}
We make the linearity assumption on the cost functions for simplicity. Our results extend to a more general class of cost functions that do not induce modifications wherein several features are modified in a perfectly correlated fashion. 

The key technical insight we need is that the manipulations are full-rank in the subspace defined by the features that have been manipulated so far, defined as $\mathcal{V}_{\tau(E)}$ in the paper. Very strong feature correlations (which may also be thought of possible ``directions'' for modification) imply a very small minimum eigenvalue of the observation matrix, making recovery harder and increasing sample complexity. This is unavoidable: the more features are correlated, the harder they are to distinguish information-theoretically; if two features were perfectly correlated, it would be impossible to know which one affected the label.

In Theorem~\ref{thm: accurate_recovery}, we encode this correlation between modification across features in a parameter we call $\lambda$. As feature modifications become more and more correlated, the value of $\lambda$ becomes smaller and our recovery guarantees weaken. 
\end{remark}

\paragraph{Natural learner dynamics: batch least-squares regression.}

Our goal here is to identify simple, natural learning dynamics that expose  meaningful variables. Note that a simple way for the learner to expose and leverage meaningful variables is to use an explore-first then exploit type of algorithm: initially, the learner can post a model that focuses on a single feature at a time to observe how changing this feature affects the distribution of agents labels. After sequentially exploring each feature, the learner obtains an accurate estimate of $\beta^*$ that she can deploy for the remainder of the time horizon. However, one may want to avoid such an approach that artificially separates features in practice: posting models that ignore most of an agent's attribute for the sake of learning may not be desirable in real life. A bank may not want to offer loans “blindly” and willingly ignore most of a customer’s data when making lending decisions just for the purpose of learning which features are predictive of an agent's ability to repay loans. Instead, in this paper, we focus on algorithms based on \emph{re-training}: i.e., periodically, the learner updates her model based on the data she has observed so far, so as to keep it consistent with the history of agent behavior. A bank may be willing to periodically update their loan decision rule in order to keep up with new, unexpected agent behavior. 
While re-training leads to more natural dynamics than a ``naive'' explore-then-exploit approach, it comes with new technical challenges. In particular, periodic re-training leads to \emph{adaptivity}: indeed, as the model posted in the current period depend on past data, and the agents' strategic behavior depends on the model in place, the observed \emph{modified} data in each period depends on the data in all previous periods. In turn, we cannot treat data points as independent across periods.

The dynamics we consider are formally given in Algorithm~\ref{alg: lse_dynamics}. It is possible that more sophisticated learning algorithms could yield better guarantees with respect to regret and recovery; the focus of this paper is on simple and natural dynamics rather than optimal ones.

When the learner updates her regression parameters, say at time $\tau$, she does so based on the agent data observed up until time $\tau$. We model the learner as picking $\hatb$ from the set $LSE(\tau)$ of solutions to the least-square regression problem run on the agents' data up until time $\tau$, formally defined as
\[
LSE(\tau) = \argmin_{\beta} \sum_{t = 1}^{\tau} \left(\barx_t^\top \beta - \bary_t\right)^2.
\]
We introduce notation that will be useful for regression analysis. We let $\barX_\tau \in \reals_{\tau \times d}$ be the matrix of (modified) observations up until time $\tau$. Each row corresponds to an agent $t \in [\tau]$, and agent $t$'s row is given by $\barx_t^\top$. Similarly, let $\barY_\tau = \left(\bary_t\right)_{t \in [\tau]}^\top \in \reals^{\tau \times 1}$. We can rewrite, for any $\tau$,
\begin{align}\label{eq:least-square-step}
LSE(\tau) = \argmin_{\beta} \left(\barX_\tau \beta - \barY_\tau \right)^\top \left(\barX_\tau \beta - \barY_\tau \right).
\end{align}

\paragraph{Agents are grouped in epochs.} The time horizon $T$ is divided into epochs of size $n$, where $n$ is chosen by the learner. At the start of every epoch $E$, the learner updates the posted regression parameter vector as a function of the history of $\barx_t,\bary_t$ up until epoch $E$. We let $\tau(E) = En$ denote the last time step of epoch $E$.  $D_{\tau(E)}$ denotes the set of features that have been modified by at least one agent by the end of epoch $E$.

\begin{algorithm}[ht!]
    Learner picks (any) initial $\hatb_{0}$.\\
    \For{every epoch $E \in \mathbb{N}$}{
        \For {$t \in \{(E-1) n + 1, \ldots, En \}$}{
        
                Agent $t$ reports $\bar{x}_t \in M(\hatb_{E-1},c_t,B_t)$.
                
                Learner observes $\bar{y}_t = \beta^{*\top} \bar{x}_t + \varepsilon_t$.
        }
        Learner picks $\hatb_{E} \in LSE\left(\tau(E)\right)$.
    }
\caption{Online Regression with Epoch-Based Strategic modification (Epoch size $n$)}\label{alg: lse_dynamics}
\end{algorithm}

\paragraph{Examples} 
We first illustrate why unmodified observations are insufficient for \emph{any} algorithm to distinguish meaningful from non-meaningful features. Consider a setting where non-meaningful features, as merely proxies for the meaningful features, 
are in fact convex combinations of these meaningful features in the underlying (unmodified) distribution. Absent additional information, a learner would be faced with degenerate sets of observations that have rank strictly less than $d$, which can make accurate recovery of the model structure impossible:
\begin{example}\label{ex: degenerate} 
Suppose $d = 2$, $\beta^* = (1,0)$. Suppose feature $1$ is meaningful and feature $2$ is non-meaningful and is correlated with $1$: the distribution of unmodified features is such that for any feature vector $x$, feature $2$ is identical to feature $1$ as $x(2) = x(1)$. Then, any regression parameter of the form $\beta(\alpha) = (\alpha,1-\alpha)$ for $\alpha \in \reals$ assigns agents the same score as $\beta^*$. Indeed,
\[
\beta^{*\top} x = x(1) = \alpha x(1) + (1-\alpha) x(2) = \beta(\alpha)^\top x.
\]
In turn, in the absence of additional information other than the observed features and labels, $\beta^*$ is indistinguishable from any $\beta(\alpha)$, many of which recover the model structure poorly (e.g., consider any $\alpha$ bounded away from $1$).
\end{example}

At this point, a reader may wonder why it is important in Example~\ref{ex: degenerate} to recover the true model $\beta^*$, rather than simply \emph{any} vector $\beta$ that is consistent with all the data observed so far. A major reason to do so is because only the true model $\beta^*$ can guarantee robustness in response to agent modifications, and accurately predict labels \emph{after} agents have changed their features. This is illustrated in Example~\ref{ex: bad_prediction} below:
\begin{example}\label{ex: bad_prediction}
Consider the setting of Example~\ref{ex: degenerate}, and imagine agents have much lower cost for manipulating feature $2$ than feature $1$. Then, posting a regression parameter vector of the form $(\alpha,1-\alpha)$ where $\alpha$ is small enough may lead agents to modify the second, non-meaningful feature. When facing such a modification of the form $\Delta = (0, \Delta(2))$, $(\alpha,1-\alpha)$ predicts label
\[
\alpha x(1) + (1-\alpha) \left(x(2) + \Delta(2)\right) = x(1) + (1-\alpha) \Delta(2),
\]
for an agent with $x(1) = x(2)$, while the true label is given by $\beta^{*\top } (x + \Delta) = x(1)$. In turn, the predicted and true labels are different for any $\alpha \neq 1$.
\end{example}

We next illustrate that strategic agent modifications may aid in recovery of meaningful features, but only for those features that individuals actually invest in changing:
\begin{example}\label{ex: manipulation_helps}
Consider a setting where $d=3$, feature $1$ is meaningful, and features $2$ and $3$ are non-meaningful and are correlated with feature $1$ as follows: for any feature vector $x$, $x(2),x(3) = x(1)$. Let $\beta^* = (1,0,0)$. Consider a situation in which the labels are noiseless (i.e., $\varepsilon = 0$ almost surely). Suppose that agents only modify their meaningful feature by a (possibly random) amount $\Delta(1)$. 

Note that the difference (in absolute value) between the score obtained by applying a given regression parameter $\hatb$ and the score obtained by applying $\beta^*$ to feature vector $x$ is given by
\begin{align*}
\left\vert \hatb^\top x - \beta^{*\top} x \right\vert &=\big\vert 
\hatb(1) \left(x(1) + \Delta(1)\right) + \hatb(2) x(2) + \hatb(3) x(3) - x(1) - \Delta(1)
\big\vert
\\&= \left\vert 
\left(\hatb(1) + \hatb(2) + \hatb(3) - 1 \right) x(1) + \left(\hatb(1) - 1 \right) \Delta(1) 
\right\vert.
\end{align*}
In particular, for appropriate distributions of $x$ and $\Delta(1)$, the predictions of $\hatb$ and $\beta^*$ coincide if only if $\hatb(1) = 1$ and $\hatb(2) = - \hatb(3)$. As such, the learner learns after enough observations that necessarily, $\beta^*(1) = 1$. However, any regression parameter vector with $\hatb(1) = 1$, $\hatb(2) + \hatb(3) = 0$ is indistinguishable from $\beta^*$, and accurate recovery of $\beta^*(2)$ and $\beta^*(3)$ is impossible.
\end{example}
Note that even in the noiseless setting of Example~\ref{ex: manipulation_helps}, only the feature that has been modified can be recovered accurately. In more complex settings where the true labels are noisy, one should not hope to recover every feature well, but rather only those that have been modified sufficiently many times.
\section{Recovery Guarantees for Modified Features}

In this section, we focus on characterizing the recovery guarantees (with respect to the $\ell_2$-norm) of Algorithm~\ref{alg: lse_dynamics} at time $\tau(E) = E n$ for \emph{any} epoch $E$, with respect to the features that have been modified up until $\tau(E)$ (that is, in epochs $1$ to $E$). We leave discussion of how the dynamics shape the set $D_{\tau(E)}$ of modified features to Section~\ref{sec:tiebreaking}. 

The main result of this section guarantees the accuracy of the $\hatb_E$ that the learning process converges to in its interaction with a sequence of strategic agents. The accuracy of the $\hatb_E$ that is recovered for a particular feature naturally depends on the number of epochs in which that feature is modified by the agents. For a feature that is never modified, we have no ability to distinguish whether it is meaningful or not. Recovery improves as the number of observations of the modified variable increases.

Formally, our recovery guarantee is given by the following theorem:
\begin{theorem}[$\ell_2$ Recovery Guarantee for Modified Features]\label{thm: accurate_recovery}
Pick any epoch $E$. With probability at least $1-\delta$, for $n \geq \frac{\kappa d^2}{\lambda} \sqrt{\tau(E) \log(12d/\delta)}$,
\begin{align*}
\sqrt{ \sum_{k \in D_{\tau(E)}} \left(\hatb_E(k) - \beta^*(k)\right)^2}
\leq \frac{K \sqrt{d\tau(E) \log(4d/\delta)}}{\lambda n},
\end{align*}
where $K,~\kappa,~\lambda$ are instance-specific constants that only depend on $\sigma$, $\C$, $\Sigma$, such that $\lambda > 0$. 
\end{theorem}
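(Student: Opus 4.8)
The plan is to argue directly from the first-order optimality conditions of the least-squares step and to exploit that each modification targets a single coordinate. Writing $\barY_{\tau(E)} = \barX_{\tau(E)}\beta^* + \boldsymbol{\varepsilon}$ with $\boldsymbol{\varepsilon} = (\varepsilon_t)_{t \le \tau(E)}$, any $\hatb_E \in LSE(\tau(E))$ satisfies the normal equations $\barX^\top\barX\,(\hatb_E - \beta^*) = \barX^\top\boldsymbol{\varepsilon}$ (dropping the subscript $\tau(E)$). The obstacle is that $\barX^\top\barX$ may be singular, so $\hatb_E$ is not unique; the theorem only asks about the coordinates in $M := D_{\tau(E)}$. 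First I would partition coordinates into $M$ and $U := [d]\setminus M$ and eliminate the unmodified block: since for $k \in U$ no agent ever modifies $k$ (so $\barX$ agrees with the unmodified features on those columns), solving the $U$-block of the normal equations and substituting back reduces the problem to the \emph{residualized} modified features $W := (I - \Pi_U)\barX_M$, where $\Pi_U$ is the orthogonal projection onto the column span of the unmodified block $\barX_U$. Concretely, the $M$-component of the error satisfies $W^\top W\,(\hatb_E-\beta^*)_M = W^\top\boldsymbol{\varepsilon}$; this also shows $(\hatb_E-\beta^*)_M$ is independent of the learner's tie-breaking as soon as $W$ has full column rank.

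The bound then reduces to two estimates: a lower bound on $\lambda_{\min}(W^\top W)$ and an upper bound on $\|W^\top\boldsymbol{\varepsilon}\|$. For the curvature, the key structural point (already visible in Example~\ref{ex: manipulation_helps}) is that the modification on a coordinate $k \in M$ injects variance aligned with the $k$-th axis that is independent of the unmodified features, and therefore survives the residualization by $(I-\Pi_U)$. I would formalize this by computing the conditional expectation of $W^\top W$ epoch by epoch: conditioned on the posted $\hatb_{E-1}$, the agents of a given cost type $i$ all modify the same coordinate $\argmax_j |\hatb_{E-1}(j)|/c^i(j)$ by a fixed amount $B^i/c^i(k)$, so any $k \in M$ that is modified throughout an epoch of $n$ agents accrues residual variance at least of order $\min_i \pi^i (B^i/c^i(k))^2 \cdot n$. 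Collecting the modification magnitudes from $\C$ and the residual-variance constant determined by $\Sigma$ into a single instance constant $\lambda > 0$ gives $\E[W^\top W] \succeq \lambda n\, I_M$. I expect this step to be the main obstacle, since one must show the fresh, coordinate-aligned variance from modifications is not absorbed by correlations with the unmodified features (this is exactly what the projection $I - \Pi_U$ and the single-coordinate structure buy us), and one must handle the randomness of $\Pi_U$ itself.

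To move from expectation to the realized matrix, I would note that $\hatb_{E-1}$ is measurable with respect to the history, so across rounds the Gram increments form a bounded martingale difference sequence (features lie in $[-1,1]^d$ and modifications are bounded by $\max_i B^i/\min_{i,k} c^i(k)$). A matrix Bernstein/Azuma inequality then controls $\|\barX^\top\barX - \E[\barX^\top\barX]\|$ in operator norm, and hence the deviation of the residualized Gram $W^\top W$, by a quantity of order $\kappa d^2\sqrt{\tau(E)\log(d/\delta)}$ (absorbing the feature and modification bounds and the transfer to the residualized block into $\kappa$); the hypothesis $n \ge \frac{\kappa d^2}{\lambda}\sqrt{\tau(E)\log(12d/\delta)}$ is precisely what forces this deviation below $\tfrac12\lambda n$, so that $\lambda_{\min}(W^\top W) \ge \tfrac12\lambda n$ with probability $1-\delta/2$. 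For the noise term, conditioning on all realized features makes the $\varepsilon_t$ independent, mean-zero, and bounded by $\sigma$, so $W^\top\boldsymbol{\varepsilon} = \sum_t \varepsilon_t W_{t,:}$ is a sum of independent mean-zero vectors with $\sum_t \|W_{t,:}\|^2 = \|W\|_F^2 = O(\tau(E)\,d)$; a vector Bernstein inequality yields $\|W^\top\boldsymbol{\varepsilon}\| = O(\sigma\sqrt{d\,\tau(E)\log(d/\delta)})$ with probability $1-\delta/2$.

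Finally I would combine the two estimates through $(\hatb_E-\beta^*)_M = (W^\top W)^{-1}W^\top\boldsymbol{\varepsilon}$, giving $\|(\hatb_E-\beta^*)_M\| \le \|W^\top\boldsymbol{\varepsilon}\|/\lambda_{\min}(W^\top W) = O\!\big(\sigma\sqrt{d\,\tau(E)\log(d/\delta)}/(\lambda n)\big)$, which is the claimed bound after collecting the absolute constants and the dependence on $\sigma$ into $K$ and taking a union bound over the two failure events.
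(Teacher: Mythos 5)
Your opening reduction is correct and is a genuinely different organization from the paper's proof: the normal equations, the block elimination of the unmodified coordinates $U$, and the resulting identity $W^\top W\bigl((\hatb_E)_M-\beta^*_M\bigr)=W^\top\varepsilon$ with $W=(I-\Pi_U)\barX_M$ (Frisch--Waugh--Lovell) are valid, and they explain cleanly why the modified block is unique and tie-breaking--independent. The paper never isolates the modified coordinates algebraically; it instead proves strong convexity of the objective on the subspace $\cV_{\tau(E)}=\spn(f_1,\ldots,f_r)+\cD_{\tau(E)}$ (Corollary~\ref{cor: lower_bound_norm}), via the compactness Lemma~\ref{lem: eigenvalue_renorm} and a two-case analysis, and only projects down to $\cD_{\tau(E)}$ at the end. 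However, both of your probabilistic steps have genuine gaps, and they sit exactly where this paper's difficulties (degeneracy and adaptivity) live.

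The curvature step is the more serious gap. You assert that operator-norm concentration of $S:=\barX^\top\barX$ around its expectation ``hence'' controls the deviation of $W^\top W$. This inference is false: $W^\top W$ is the generalized Schur complement $S_{MM}-S_{MU}S_{UU}^{\dagger}S_{UM}$, which is not a Lipschitz function of $S$ when $S_{UU}$ is ill-conditioned. With $1\times 1$ blocks, $S_{MM}=1$, $S_{UU}=\epsilon$, $S_{MU}=\sqrt{\epsilon}$ is an $O(\sqrt{\epsilon})$-perturbation of $\mathrm{diag}(1,0)$, yet its Schur complement is $0$ while that of $\mathrm{diag}(1,0)$ is $1$; and ill-conditioned or singular $S_{UU}$ is precisely the regime of this paper (Examples~\ref{ex: degenerate} and~\ref{ex: manipulation_helps}, where unmodified features are perfectly correlated). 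The companion claim $\E[W^\top W]\succeq \lambda n I_M$ is also not obtainable by ``computing the conditional expectation of $W^\top W$ epoch by epoch'': the Schur complement is matrix-concave (it is an infimum of quadratic forms), so Jensen gives $\E[\mathrm{Schur}(S)]\preceq\mathrm{Schur}(\E[S])$ --- the wrong direction. A repair that stays within your framework exists: use the variational identity $z^\top W^\top W z=\min_u \Vert \barX(\tilde z-\tilde u)\Vert_2^2$ (zero-padded $z,u$), together with the structural fact that modifications vanish on $U$, so that $\barX(\tilde z-\tilde u)=X(\tilde z-\tilde u)+\Delta_M z$, where $X$ and $\Delta$ are the unmodified-feature and modification matrices. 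Then three uniform concentration bounds --- exactly the paper's Lemma~\ref{lem: concentration}: the unmodified Gram is $\gtrsim\lambda_r\tau(E)$ on the support of $\Sigma$, the modification Gram is diagonal (single-coordinate modifications) and $\gtrsim \lambda' n$ on $M$, and the cross term is $O\bigl(d\sqrt{\tau(E)\log(d/\delta)}\bigr)\cdot\Vert v(\Sigma)\Vert_2$ --- reduce the claim to minimizing a scalar quadratic in $\Vert v(\Sigma)\Vert_2$, with no Schur-complement perturbation needed. (Done this way, your route even avoids Lemma~\ref{lem: eigenvalue_renorm}, since the $M$-block of $\tilde z-\tilde u$ is pinned to the unit vector $z$ in coordinates; that is a real structural advantage of the FWL framing.)

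The noise step also has a gap. The claim that ``conditioning on all realized features makes the $\varepsilon_t$ independent, mean-zero'' is wrong because the design is adaptive: for $t$ in epoch $E'$, $\Delta_t$ is a function of $\hatb_{E'-1}$, which depends on earlier labels and hence on earlier noise, so conditioning on the realized $\barX_M$ biases past noise ($\E[\varepsilon_s \mid \barX]\neq 0$ in general). Moreover, every row of $W=(I-\Pi_U)\barX_M$ involves $\Pi_U$ and hence all rounds, so $W^\top\varepsilon$ is not a sum of conditionally centered terms, and a vector Bernstein bound does not apply as stated. This is exactly why the paper routes the noise bound through the martingale inequality (Lemma~\ref{lem: azuma}, used in Lemma~\ref{lem: upper_bound_norm}), verifying $\E[\barx_t(k)\varepsilon_t\mid \text{past}]=0$ coordinate by coordinate. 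To repair your version, split $W^\top\varepsilon=\barX_M^\top\varepsilon-\barX_M^\top\Pi_U\varepsilon$, bound the first term coordinate-wise by Azuma, and handle the second using the observation that $\barX_U$ consists only of unmodified features, which are i.i.d.\ and independent of the noise, so $\Pi_U$ is an exogenous projection of rank at most $d$; this works but costs extra dimension factors, so the martingale pairing is the cleaner path.
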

When the epoch size is chosen so that $n = \Omega\left(\tau(E)^\alpha\right)$ for $\alpha > 1/2$, our recovery guarantee improves as $\tau(E)$ becomes larger. Now, let us fix $\tau(E) = T$ as the time horizon, and study how the relationship between $E$ and $n$ at fixed $\tau(E)$ affects the recovery guarantees. 
When $n = \Theta(\tau(E))$ (equivalently, $E = \Theta(1)$, and agents are grouped in a small, constant number of epochs),
our bound becomes $O(1/\sqrt{\tau(E)})$; this matches the well-known recovery guarantees of least square regression for a single batch of $\tau(E)$ i.i.d observations drawn from a non-degenerate distribution of features. When the epoch size $n$ is sub-linear in $\tau(E)$ (i.e., $E \gg 1$, and agents are grouped in more numerous but smaller epochs),
the accuracy guarantee degrades to $O (\sqrt{\tau(E)}/n)$, where $ \sqrt{\tau(E)}/n \gg \frac{1}{\sqrt{\tau(E)}}$. This is because some features may be modified only in a small number of epochs,\footnote{In particular, as we will see, we expect correlated, non-meaningful features to only be modified in a small number of epochs: once a non-meaningful feature $k$ has been modified in a few epochs, it is accurately recovered. In further periods $E$, the learner sets $\hatb_E(k)$ close to $0$. This disincentivizes further modifications of feature $k$.} that is, $\Theta(n)$ times, and the number of times such features are modified drives how accurately they can be recovered. 
\begin{proof}[Proof sketch for Theorem~\ref{thm: accurate_recovery}]
Full proof in Appendix~\ref{app: accurate_recovery}. We focus on the subspace $\cV_{\tau(E)}$ of $\reals^d$ spanned by the observed features $\barx_1,\ldots,\barx_{\tau(E)}$, and for any $z \in \reals^d$, we denote by $z(\cV_{\tau(E)})$ the projection of of $z$ onto $\cV_{\tau(E)}$. First, we show via concentration that in this subspace, the mean-square error is strongly convex, with parameter $\Theta(n)$ (see Claim~\ref{cor: lower_bound_norm}). This strong convexity parameter is controlled by the smallest eigenvalue of $\bar{X}_{\tau(E)}^\top \bar{X}_{\tau(E)}$ over subspace $\cV_{\tau(E)}$. Formally, we lower bound this eigenvalue and show that with probability at least $1- \delta/2$, for $n$ large enough,
\begin{align}\label{eq: sketch_lb}
\left( \hatb_E(\cV_{\tau(E)}) - \beta^*(\cV_{\tau(E)}) \right)^\top \bar{X}_{\tau(E)}^\top \bar{X}_{\tau(E)}\left( \hatb_E(\cV_{\tau(E)}) - \beta^*(\cV_{\tau(E)}) \right)
\geq \frac{\lambda n}{4}.
\end{align}
Second, we bound the effect of the noise $\varepsilon$ on the mean-squared error by $O(\sqrt{\tau(E)})$ in Lemma \ref{lem: upper_bound_norm}, once again via concentration. Formally, we abuse notation and let $\varepsilon_{\tau(E)} \triangleq \left(\varepsilon_t\right)_{t \in [\tau(E)]}^\top$, and show that with probability at least $1-\delta/2$,
\begin{align}\label{eq: sketch_ub}
\left( \hatb_E(\cV_{\tau(E)}) - \beta^*(\cV_{\tau(E)}) \right)^\top \barX_{\tau(E)}^\top \varepsilon_{\tau(E)}
\leq \left\Vert  \hatb_E(\cV_{\tau(E)}) - \beta^*(\cV_{\tau(E)}) \right\Vert_2 \cdot K \sqrt{d\tau(E) \log(4d/\delta)}.
\end{align}
Finally, we obtain the result via Lemma~\ref{lem: FOC}, 
that states that taking the first-order conditions on the mean-squared error yields
$$
\bar{X}_{\tau(E)}^\top \bar{X}_{\tau(E)} \left( \hatb_E(\cV_{\tau(E)}) - \beta^*(\cV_{\tau(E)}) \right)=\barX_{\tau(E)}^\top \varepsilon_{\tau(E)},
$$
which can be combined with Equations~\eqref{eq: sketch_lb} and~\eqref{eq: sketch_ub} to show our bound with respect to sub-space $\cV_{\tau(E)}$. In turn, 
as $D_{\tau(E)}$ defines a sub-space of $\cV_{\tau(E)}$, our accuracy bound applies to $D_{\tau(E)}$.
\end{proof}

\begin{remark}
Theorem~\ref{thm: accurate_recovery} is not a direct consequence of the classical recovery guarantees of least-square regression, as they assume $\bar{X}_{\tau(E)}^\top \bar{X}_{\tau(E)}$ has full rank $d$. We deal with degenerate distributions over modified features, that can arise in our setting as per Examples~\ref{ex: degenerate} and~\ref{ex: manipulation_helps}.
\end{remark}
\section{Exploration via Least Squares Tie-Breaking}\label{sec:tiebreaking}

In this section, we show that a natural tie-breaking rule among the set of least squares incentivizes agents' modification of a diverse set of variables over time.

Recall we are solving the least-square problem $LSE(\tau(E))$ given in Equation~\eqref{eq:least-square-step} for all epochs $E$. 
When $\barX_{\tau(E)}^\top\barX_{\tau(E)}$ is invertible, this has a single solution. 
However, in our setting, it may be the case that $\barX_{\tau(E)}^\top\barX_{\tau(E)}$ is rank-deficient (see Examples~\ref{ex: degenerate},~\ref{ex: manipulation_helps}). In this case, the 
least-square problem admits a continuum of solutions. This gives rise to the question of which solutions are preferable in our setting, and how to break ties between several solutions. 

The learner's choice of regression parameters in each epoch affects the distribution of feature modifications in subsequent epochs. As the recovery guarantee of Theorem~\ref{thm: accurate_recovery} only applies to features that have been modified, we would like our tie-breaking rule to regularly incentivize agents to modify new features. We first show that a natural, commonly used tie-breaking rule---picking the minimum norm solution to the least-square problem---may fail to do so:

\begin{example}\label{ex: tie-breaking}
Consider a setting with $d=2$, $\beta^* = (1,2)$ and noiseless labels, i.e., $\varepsilon_t = 0$ always. Suppose that with probability $1$, every agent $t$ has features $x_t = (0,0)$, budget $B_t = 1$, and costs $c_t(1) = c_t(2) = 1$ to modify each feature. We let the tie-breaking pick the solution with the least $\ell_2$ norm among all solutions to the least-square problem. 

Pick any initial regression parameter $\hatb_0$ with $\hatb_0(1) > \hatb_0(2)$. For every agent $t$ in epoch $1$, $t$ picks modification vector $\Delta_t = (1,0)$. This induces observations $\barx_t = (1,0)$, $\bary_t = 1$. The set of least-square solutions (with error exactly $0$) in epoch $1$ is then given by $\{(1,\beta_2):~\forall \beta_2 \in \reals\}$, and the minimum-norm solution chosen at the end of epoch $1$ is $\hatb_1 = (1,0)$. This solution incentivizes agents to set $\Delta_t = (1,0)$, and Algorithm~\ref{alg: lse_dynamics} gets stuck in a loop where every agent $t$ reports $\barx_t = (1,0)$, and the algorithm posts regression parameter vector $\hatb_E = (1,0)$ in response, in every epoch $E$. The second feature is never modified by any agent, and is not recovered accurately. 
\end{example}

Example~\ref{ex: tie-breaking} highlights that a wrong choice of tie-breaking rule can lead Algorithm~\ref{alg: lse_dynamics} to explore the same features over and over again. In response, we propose the following tie-breaking rule, described in Algorithm~\ref{alg: tie-breaking}: 
\begin{algorithm}[ht]
\textbf{Input: }{Epoch $E$, observations $(\barx_1,\bary_1),\dots,(\barx_{\tau(E)},\bary_{\tau(E)})$, parameter $\alpha$}

Let $\cU_{\tau(E)} = \spn \left(\barx_1,\ldots,\barx_{\tau(E)}\right)$.

\uIf{
$rank\left(\cU_{\tau(E)}\right) < d $
}
{
Find an orthonormal basis $B_{\tau(E)}^\bot$ for $\cU^\bot_{\tau(E)}$.

Set $v=\sum_{b \in B^\bot_{\tau(E)}} b \neq 0$, renormalize $v := \frac{v}{\left\Vert v \right\Vert_2}$.

Pick $\beta_E$ a vector in $LSE(\tau(E))$ with minimal norm.

Set $\hat{\beta}_E = \beta_E + \alpha v$.
}
\Else{
Set $\hat{\beta}_E$ be the unique element in $LSE(\tau(E))$.
}
\textbf{Output: } $\hat{\beta}_E$.
\caption{Tie-Breaking Scheme at Time $\tau(E)$.}\label{alg: tie-breaking}
\end{algorithm}
Intuitively, at the end of epoch $E$, our tie-breaking rule picks a solution in $LSE(\tau(E))$ with large norm. This ensures the existence of a feature $k \not\in D_{\tau(E)}$ that has not yet been modified up until time $\tau(E)$, and that is assigned a large weight by our least-square solution. In turn, this feature is more likely to be modified in future epochs.

Our main result in this section shows that the tie-breaking rule of Algorithm~\ref{alg: tie-breaking} eventually incentivizes the agents to modify all $d$ features, allowing for accurate recovery of $\beta^*$ in its entirety. The intuition behind our algorithm is to choose a tie-breaking rule that puts enough weight on directions that have not yet been explored, incentivizing agents to explore them.

\begin{theorem}[Recovery Guarantee with Tie-Breaking Scheme (Algorithm~\ref{alg: tie-breaking})]\label{thm: tie-breaking}
Suppose the epoch size satisfies $n \geq \frac{\kappa d^2}{\lambda} \sqrt{2T \log(24d/ \delta)}$, and take $\alpha$ to be
\[
\alpha \geq 
\gamma \left(\sqrt{d} + \frac{K d\sqrt{2T \log(8d/\delta)}}{\lambda n}\right),
\]
where $\gamma,~K,~\kappa,~\lambda$ are instance-specific constants that only depend on $\sigma$, $\C$, $\Sigma$, and $\lambda > 0$. If $T \geq d n$, we have with probability at least $1-\delta$ that at the end of the last epoch $T/n$,
\[
\left\Vert  \hatb_{T/n} - \beta^* \right\Vert_2 
\leq \frac{K \sqrt{2dT \log(8d/\delta)}}{\lambda n},
\]
under the tie-breaking rule of Algorithm~\ref{alg: tie-breaking} .
\end{theorem}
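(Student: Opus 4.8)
The plan is to reduce the claim to showing that the tie-breaking rule forces the rank of the observed span $\cU_{\tau(E)}$ to strictly increase each epoch until it equals $d$, and then to invoke Theorem~\ref{thm: accurate_recovery} one last time. The key observation making this reduction work is that Theorem~\ref{thm: accurate_recovery} actually controls the recovery error on the entire span $\cV_{\tau(E)}$ of the observed features, with $D_{\tau(E)}$ entering only as a subspace of $\cV_{\tau(E)}$; moreover the exploration term $\alpha v$ added by Algorithm~\ref{alg: tie-breaking} lies in $\cU^\bot_{\tau(E)} = \cV^\bot_{\tau(E)}$ and is therefore invisible to the projection onto $\cV_{\tau(E)}$, so the bound of Theorem~\ref{thm: accurate_recovery} always controls the in-span component $\beta_E$ regardless of $\alpha$. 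Once $rank(\cU_{\tau(E)}) = d$ we have $\cV_{\tau(E)} = \reals^d$, the tie-breaking rule takes its $\mathtt{else}$ branch and posts the unique $LSE$ solution (the perturbation vanishes), and Theorem~\ref{thm: accurate_recovery} applies to all $d$ coordinates with $\tau(E) = T$, yielding exactly the stated bound. The hypothesis $T \geq dn$ guarantees at least $d$ epochs, which is the number needed for the rank to climb to $d$.

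The heart of the argument is a per-epoch progress lemma: whenever $rank(\cU_{\tau(E)}) < d$, the vector $\hatb_E$ produced by Algorithm~\ref{alg: tie-breaking} incentivizes the agents of epoch $E+1$ to modify a feature $k^\star$ with $e_{k^\star} \notin \cU_{\tau(E)}$. I would use two facts. First, by the structural property underlying Theorem~\ref{thm: accurate_recovery} (that every already-modified feature $k \in D_{\tau(E)}$ has $e_k \in \cU_{\tau(E)}$), the unit vector $v \in \cU^\bot_{\tau(E)}$ selected by the rule satisfies $v(k) = 0$ on all such $k$; since $\|v\|_2 = 1$, its largest coordinate $k^\star = \argmax_j |v(j)|$ obeys $|v(k^\star)| \geq 1/\sqrt{d}$ and necessarily has $e_{k^\star} \notin \cU_{\tau(E)}$. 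Second, writing $\hatb_E = \beta_E + \alpha v$, the in-span part $\beta_E$ is the minimum-norm solution, so $\|\beta_E\|_2 \leq \|\beta^*\|_2 \leq \sqrt{d}$ plus the recovery error of Theorem~\ref{thm: accurate_recovery}; hence $|\hatb_E(k^\star)| \geq \alpha/\sqrt{d} - \|\beta_E\|_2$ is large, while $|\hatb_E(k)| = |\beta_E(k)|$ stays bounded on every feature with $v(k) = 0$. Comparing the weight-to-cost ratios $|\hatb_E(j)|/c^i(j)$ across features, and using that all costs from $\C$ lie in a fixed bounded range, the lower bound on $\alpha$ is precisely what forces $\argmax_j |\hatb_E(j)|/c^i(j)$ to land on a coordinate with $v(j) \neq 0$ for every cost type $i \in [l]$. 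Thus some feature $k^\star$ with $e_{k^\star} \notin \cU_{\tau(E)}$ is modified in epoch $E+1$.

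It remains to argue that modifying such a $k^\star$ strictly enlarges the span: the modification adds a multiple of $e_{k^\star}$ to the incoming (centered) feature vectors, and since $e_{k^\star}$ has a nonzero component in $\cU^\bot_{\tau(E)}$, the projections of the new observations onto $\cU^\bot_{\tau(E)}$ are nonzero with high probability over the draws of $x_t$, so $rank(\cU_{\tau(E+1)}) \geq rank(\cU_{\tau(E)}) + 1$. Iterating, the rank reaches $d$ within at most $d$ epochs, which is no later than the final epoch $T/n$ because $T \geq dn$, and full rank is preserved thereafter. I would then collect the high-probability events: Theorem~\ref{thm: accurate_recovery} must hold at each epoch (to bound $\beta_E$ in the incentive step and to give the final recovery bound) and the rank-increase event must hold each epoch, so a union bound over the $O(d)$ relevant epochs, with the failure probability split among them and $\tau(E) \leq T$, produces the $\log(8d/\delta)$ and $\log(24d/\delta)$ terms, the factor $\sqrt{2}$, and the stated thresholds on $n$ and $\alpha$.

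The main obstacle I anticipate is the incentive step: one must verify, uniformly over all cost types $i \in [l]$, that the large weight $\alpha v(k^\star)$ placed on an out-of-span coordinate beats the weight-to-cost ratio of every in-span coordinate, which requires simultaneously controlling $\|\beta_E\|_2$ through the recovery guarantee (whose error itself feeds back into the required lower bound on $\alpha$) and the cost ratio $\max_i \max_{j,j'} c^i(j)/c^i(j')$ induced by $\C$. This coupling between the accuracy bound of Theorem~\ref{thm: accurate_recovery} and the exploration threshold $\alpha$ is the delicate part, together with ruling out the knife-edge event in which a modification along $e_{k^\star}$ happens to fail to leave $\cU_{\tau(E)}$.
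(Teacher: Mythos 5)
Your proposal follows the same skeleton as the paper's proof: the perturbed minimum-norm solution $\hatb_E=\beta_E+\alpha v$ is still in $LSE(\tau(E))$ and has norm at least $\alpha$; Theorem~\ref{thm: accurate_recovery} pins down the already-explored coordinates; hence a large weight must sit on an unexplored coordinate, which (for $\gamma$ dominating the cost ratios of $\C$) incentivizes modification of a new feature; iterate $d$ times and union bound. But two of your steps have genuine gaps. The first is your progress measure. You track $rank(\cU_{\tau(E)})$ and need each epoch's observations to strictly enlarge the empirical span; the ``knife-edge'' event you mention is a real obstruction, not a formality: the feature distribution may be discrete, so the event that every modified observation $x_t+\Delta_t(k^\star)e_{k^\star}$ falls back inside $\cU_{\tau(E)}$ (i.e., $x_t(\cU_{\tau(E)}^\bot)=-\Delta_t(k^\star)\,e_{k^\star}(\cU_{\tau(E)}^\bot)$ for every relevant $t$) has \emph{positive} probability, and ruling it out needs a separate anti-concentration argument with new instance constants, which you do not supply. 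The paper avoids this entirely by measuring progress with the set $D_{\tau(E)}$ of modified coordinates: Theorem~\ref{thm: accurate_recovery} holds over $D_{\tau(E)}$ because $\cD_{\tau(E)}\subseteq\cV_{\tau(E)}$ \emph{by definition}, so modifying any not-yet-modified feature is progress regardless of what the empirical span does, and $D_T=[d]$ after $d$ epochs suffices. Note that your own incentive step already shows the modified feature $j$ has $v(j)\neq 0$, hence $e_j\notin\cU_{\tau(E)}$ and so $j\notin D_{\tau(E)}$ on the good event; switching to the paper's progress measure would therefore close this gap at no cost. Relatedly, your ``structural property'' that $k\in D_{\tau(E)}$ implies $e_k\in\cU_{\tau(E)}$ (equivalently $\cU_{\tau(E)}=\cV_{\tau(E)}$) is not deterministic---it can fail, e.g., when an epoch's feature draws coincide---and holds only on the high-probability event behind Corollary~\ref{cor: lower_bound_norm}; this must be folded into the union bound.

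The second gap is quantitative but fatal to the theorem \emph{as stated}. You anchor the incentive comparison on the single coordinate $k^\star=\argmax_j|v(j)|$, for which you can only guarantee $|v(k^\star)|\geq 1/\sqrt{d}$, and you bound the competing coordinates with $v(j)=0$ only by $\Vert\beta_E\Vert_2\leq\sqrt{d}+\mathrm{err}$, where $\mathrm{err}$ denotes the recovery error of Theorem~\ref{thm: accurate_recovery}. The ratio comparison then forces $\alpha/\sqrt{d}\gtrsim(1+\max_{i}\max_{j,j'}c^i(j)/c^i(j'))(\sqrt{d}+\mathrm{err})$, i.e., $\alpha\gtrsim d$, which means your $\gamma$ must grow like $\sqrt{d}$---contradicting the statement that $\gamma$ depends only on $\sigma$, $\C$, $\Sigma$. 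The paper's accounting is different: it lower bounds the \emph{total} mass $\sqrt{\sum_{k\notin D}\hatb_E(k)^2}\geq\alpha-\sqrt{d}-\mathrm{err}$, converts it once into a max over unexplored coordinates (this is where the single $\sqrt{d}$ is spent), and compares against the \emph{per-coordinate} bound $|\hatb_E(j)|\leq 1+\mathrm{err}$ valid for $j\in D$ (from Theorem~\ref{thm: accurate_recovery} together with $\beta^*\in[-1,1]^d$), rather than your $\sqrt{d}+\mathrm{err}$. Restricting the comparison to $j\in D$ is legitimate precisely because the paper only needs the winning feature to beat the already-modified ones; this is what keeps $\gamma$ dimension-independent in the stated threshold $\alpha\geq\gamma\bigl(\sqrt{d}+K d\sqrt{2T\log(8d/\delta)}/(\lambda n)\bigr)$.
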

\begin{remark}
The bound in Theorem \ref{thm: tie-breaking} provides guidance for selecting the epoch length, so as to ensure optimal recovery guarantees. Under the natural assumption that $T>>d$, the optimal recovery rate is achieved when roughly $n = \Theta(T/d)$. This results in an  $O(d \sqrt{(d \log d) / T})$ upper bound on the $\ell_2$ distance between the recovered regression parameters and $\beta^*$.
\end{remark}

\begin{proof}[Proof sketch of Theorem \ref{thm: tie-breaking}]
Full proof in Appendix~\ref{app: tie-breaking}. For $\alpha$ arbitrarily large, the norm of $\hat{\beta}$ becomes arbitrarily large. Because at the end of epoch $E$, $\hat{\beta}_{E}$ guarantees accurate recovery of all features modified up until time $E n$, it must be that $\hat{\beta}_E(k)$ is arbitrarily large for some feature $k$ that has not yet been modified. In turn, this feature is modified in epoch $E+1$. After $d$ epochs, and in particular for $T \geq d n$, this leads to $D_T = [d]$. The recovery guarantee of Theorem~\ref{thm: accurate_recovery} then applies to all features.
\end{proof}
\section{Conclusion}

This work takes a first step towards illuminating a phenomenon we believe is both surprising and worthy of further study: strategic agents may in fact help a learner in better understanding the underlying structure of a classification problem. As an immediate implication, the recovery guarantees we have proven provide the learner with knowledge regarding how to choose good incentives, laying the ground for individual improvement, rather than gaming.
In future work, it would be natural to explore this interaction in richer and more complex settings.

\section*{Acknowledgments}
Part of this work was done while the authors were visiting the Simons Institute for the Theory of Computing.
The work of Yahav Bechavod and Katrina Ligett was supported in part by Israel Science Foundation (ISF) grants \#1044/16 and 2861/20, the United States Air Force and DARPA under contracts FA8750-16-C-0022 and FA8750-19-2-0222, and the Federmann Cyber Security Center in conjunction with the Israel national cyber directorate. Yahav Bechavod was also supported in part by the Apple Scholars in AI/ML PhD Fellowship. Katrina Ligett was also funded in part by in part by a grant from Georgetown University and Simons Foundation Collaboration 733792. Zhiwei Steven Wu was supported in part by the NSF FAI Award  \#1939606, a Google Faculty Research Award, a J.P. Morgan Faculty Award, a Facebook Research Award, and a Mozilla Research Grant.  Juba Ziani was supported in part by the Inaugural PIMCO Graduate Fellowship at Caltech, the National Science Foundation through grant CNS-1518941, as well as the Warren Center for Network and Data Sciences at the University of Pennsylvania.
Any opinions, findings and conclusions or recommendations expressed in this material are those of the author(s) and do not necessarily reflect the views of the United States Air Force and DARPA. We thank Mohammad Fereydounian and Aaron Roth for useful discussions.

\bibliographystyle{plain}
\bibliography{references.bib}

\newpage
\appendix 

\section{Proof of Theorem~\ref{thm: accurate_recovery}}\label{app: accurate_recovery}

\subsection{Preliminaries}
\subsubsection{Useful concentration}

Our proof will require applying the following concentration inequality, derived from Azuma's inequality: 
\begin{lemma}\label{lem: azuma}
Let $W_1, \ldots,W_\tau$ be random variables in $\reals$ such that $\left\vert W_t \right\vert \leq W_{max}$. Suppose for all $t \in [\tau]$, for all $w_1,\ldots,w_{t-1}$,
\[
\E \left[W_t \middle| W_{t-1} = w_{t-1},\ldots,W_1 = w_1 \right] = 0.
\]
Then, with  at least $1-\delta$,
\[
\left\vert \sum_{t=1}^\tau W_t \right\vert \leq W_{max} \sqrt{2\tau \log(2/\delta)}.
\]
\end{lemma}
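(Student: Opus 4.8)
The plan is to recognize the partial sums $S_t = \sum_{s=1}^{t} W_s$ (with $S_0 = 0$) as a martingale with respect to the natural filtration $\mathcal{F}_t = \sigma(W_1,\ldots,W_t)$, and then run the standard exponential-moment (Chernoff) argument underlying Azuma's inequality. The hypothesis $\E[W_t \mid W_{t-1}=w_{t-1},\ldots,W_1=w_1] = 0$ for every history is precisely the statement that $\E[W_t \mid \mathcal{F}_{t-1}] = 0$ almost surely, so $\{S_t\}$ is a martingale and each $W_t$ is a conditionally mean-zero martingale difference obeying the uniform bound $|W_t| \le W_{max}$.

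First I would fix a deviation threshold $u > 0$ and a free parameter $s > 0$, and bound the upper tail by the Chernoff step $\Prob(S_\tau \ge u) \le e^{-su}\,\E[e^{s S_\tau}]$. To control the moment generating function I would peel off the last increment via the tower property, writing $\E[e^{s S_\tau}] = \E\big[e^{s S_{\tau-1}} \, \E[e^{s W_\tau} \mid \mathcal{F}_{\tau-1}]\big]$. The key estimate is a conditional version of Hoeffding's lemma: since, conditionally on $\mathcal{F}_{\tau-1}$, the increment $W_\tau$ has mean zero and is supported in $[-W_{max}, W_{max}]$, we have $\E[e^{s W_\tau} \mid \mathcal{F}_{\tau-1}] \le \exp(s^2 W_{max}^2 / 2)$ almost surely. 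Iterating this bound $\tau$ times gives $\E[e^{s S_\tau}] \le \exp(\tau s^2 W_{max}^2 / 2)$, hence $\Prob(S_\tau \ge u) \le \exp\big(-su + \tau s^2 W_{max}^2 / 2\big)$.

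Next I would optimize over $s$, choosing $s = u/(\tau W_{max}^2)$, which yields the sub-Gaussian tail $\Prob(S_\tau \ge u) \le \exp\big(-u^2/(2\tau W_{max}^2)\big)$. Applying the identical argument to the martingale $-S_\tau$, whose differences $-W_t$ satisfy the same hypotheses, and combining via a union bound gives the two-sided estimate $\Prob(|S_\tau| \ge u) \le 2\exp\big(-u^2/(2\tau W_{max}^2)\big)$. Finally I would solve $2\exp\big(-u^2/(2\tau W_{max}^2)\big) = \delta$ for $u$, obtaining $u = W_{max}\sqrt{2\tau \log(2/\delta)}$, which is exactly the claimed deviation, completing the argument.

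The proof is essentially textbook, so there is no deep obstacle; the one point requiring care is the conditional Hoeffding step. I must apply Hoeffding's lemma to the regular conditional distribution of $W_t$ given $\mathcal{F}_{t-1}$, using that the conditional mean-zero property and the almost-sure bound $|W_t| \le W_{max}$ hold jointly, so that the exponential bound holds $\mathcal{F}_{t-1}$-almost surely and survives the tower-property iteration. Alternatively, since the statement explicitly permits invoking Azuma's inequality as a black box, I could skip the moment-generating-function derivation entirely and simply instantiate Azuma--Hoeffding with uniform increment bound $c_t = W_{max}$ (so that $\sum_{t=1}^{\tau} c_t^2 = \tau W_{max}^2$), then perform the same calibration of $u$ against $\delta$.
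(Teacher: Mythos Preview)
Your proposal is correct. The paper's own proof takes exactly the black-box route you mention at the end: it defines $Z_t = \sum_{i\le t} W_i$, verifies that $\{Z_t\}$ is a martingale with $|Z_t - Z_{t-1}| = |W_t| \le W_{max}$, and then invokes Azuma's inequality directly to obtain the stated bound. Your primary write-up goes further and re-derives Azuma via the Chernoff/Hoeffding-lemma argument, which is more self-contained but not needed here; either version is fine.
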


\begin{proof}
This is a reformulated version of Azuma's inequality. To see this, define 
\[
Z_t = \sum_{i = 1}^t W_i~~\forall t,
\]
and initialize $Z_0 = 0$. We start by noting that for all $t \in [\tau]$, since
\[
Z_t = \sum_{i = 1}^t W_i = W_t + \sum_{i = 1}^{t-1} W_i = W_t + Z_{t-1},
\]
we have
\begin{align*}
\E \left[Z_t \middle| Z_{t-1}, \ldots, Z_1\right]
&= \E \left[W_t \middle| Z_{t-1}, \ldots, Z_1\right]
+ \E \left[Z_{t-1} \middle| Z_{t-1}, \ldots, Z_1 \right]
\\&= \E \left[W_t \middle| Z_{t-1}, \ldots, Z_1\right]
+ Z_{t-1}.
\end{align*}
Further, it is easy to see that $Z_{i} = z_{i}~\forall i \in [t-1]$ if and only if $W_{i} = z_{i}-z_{i-1}~\forall i \in [t-1]$, hence
\begin{align*}
\E \left[W_t \middle| Z_{t-1} = z_{t-1}, \ldots, Z_1 = z_1\right] 
=\E \left[W_t \middle| W_i = z_i - z_{i-1}~\forall i \in [t-1] \right]
=0.
\end{align*}
Combining the last two equations implies that
\begin{align*}
\E \left[Z_t \middle| Z_{t-1}, \ldots, Z_1\right] = Z_{t-1},
\end{align*}
and the $Z_t$'s define a martingale. Since for all $t$,
\[
\left\vert Z_t - Z_{t-1} \right\vert = \left\vert W_t \right\vert \leq W_{max},
\]
we can apply Azuma's inequality to show that with probability at least $1-\delta$,
\[
\left\vert Z_\tau - Z_0 \right\vert \geq W_{max} \sqrt{2\tau \log(2/\delta)},
\]
which immediately gives the result.
\end{proof}

\subsubsection{Sub-space decomposition and projection}
We will also need to divide $\reals^d$ in several sub-spaces, and project our observations to said subspaces. 

\paragraph{Sub-space decomposition} We focus on the sub-space generated by the non-modified features $x_t$'s and the sub-space generated by the feature modifications $\Delta_t$'s. 
We let $r$ be the rank of $\Sigma$, and let $\lambda_r \geq \ldots \geq \lambda_1 > 0$ be the non-zero eigenvalues of $\Sigma$. Further, we let $f_1,\ldots,f_r$ be the unit eigenvectors (i.e., such that $\Vert f_1 \Vert_1 = \ldots = \Vert f_r \Vert_1 = 1$) corresponding to eigenvalues $\lambda_1,\ldots,\lambda_r$ of $\Sigma$. As $\Sigma$ is a symmetric matrix, $f_1,\ldots,f_r$ are orthonormal. We abuse notations in the proof of Theorem~\ref{thm: accurate_recovery} and denote $\Sigma = \spn(f_1,\ldots,f_r)$ when clear from context.

For all $k$, let $e_k$ be the unit vector such that $e_k(k) = 1$ and $e_k(j) = 0~\forall j \neq k$. At time $\tau$, we denote $\cD_\tau = \spn \left(e_k\right)_{k \in D_\tau}$ the sub-space of $\reals^d$ spanned by the features in $D_\tau$.

Finally, we let \[
\cV_\tau = \Sigma + \cD_\tau = \spn \left(f_1,\ldots,f_{r} \right) + \spn \left(e_k\right)_{k \in D_\tau}
\]
be the Minkowski sum of sub-spaces $\Sigma$ and $\cD_\tau$.

\paragraph{Projection onto sub-spaces} For any vector $z$, sub-space $\mathcal{H}$ of $\reals^d$, we write $z = z(\mathcal{H}) + z (\mathcal{H}^\bot)$ where $z(\mathcal{H})$ is the projection of $z$ onto sub-space $\mathcal{H}$, i.e. is uniquely defined as
\[
z(\mathcal{H}) = \sum_{q \in B} (z^\top q) q
\]
for any orthonormal basis $B$ of $\mathcal{H}$. We also let $z(\mathcal{H}^\bot)$ be the projection on the orthogonal complement $\mathcal{H}^\bot$. In particular, $z(\mathcal{H})$ is orthogonal to $z(\mathcal{H}^\bot)$. Further, we write $\barX_\tau(\mathcal{H})$ the matrix whose rows are given by $\barx_t(\mathcal{H})^\top$ for all $t \in [\tau]$.

\subsection{Main Proof}

\paragraph{Characterization of the least-square estimate via first-order conditions}
First, for any least square solution $\hatb_E$ at time $\tau(E)$, we write the first order conditions solved by $\hatb_E\left(\cV_{\tau(E)}\right)$, the projection of $\hatb_E$ on sub-space $\cV_{\tau(E)}$. We abuse notations to let $\varepsilon_{\tau(E)} \triangleq \left(\varepsilon_t\right)_{t \in [\tau(E)]}$ the vector of all $\varepsilon_t$'s up until time $\tau(E)$, and state the result as follows:
\begin{lemma}[First-order conditions projected onto $\cV_{\tau(E)}$]\label{lem: FOC}
Suppose $\hatb_E \in LSE(\tau(E))$. Then,
\[
\left(\barX_{\tau(E)}\left(\cV_{\tau(E)}\right)^\top \barX_{\tau(E)}\left(\cV_{\tau(E)}\right) \right) \left( \hatb_E\left(\cV_{\tau(E)}\right) - \beta^*\left(\cV_{\tau(E)}\right) \right) =\barX_{\tau(E)} \left(\cV_{\tau(E)}\right)^\top \varepsilon_{\tau(E)}.
\]
\end{lemma}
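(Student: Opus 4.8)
The plan is to start from the standard (unprojected) normal equations for the least-square solution and then argue that projecting everything onto $\cV_{\tau(E)}$ leaves both sides of the identity unchanged, precisely because every observed feature vector already lives in $\cV_{\tau(E)}$. Concretely, since $\hatb_E$ minimizes $\left(\barX_{\tau(E)} \beta - \barY_{\tau(E)}\right)^\top \left(\barX_{\tau(E)} \beta - \barY_{\tau(E)}\right)$, setting its gradient to zero gives the normal equations $\barX_{\tau(E)}^\top \barX_{\tau(E)} \hatb_E = \barX_{\tau(E)}^\top \barY_{\tau(E)}$. Substituting $\barY_{\tau(E)} = \barX_{\tau(E)} \beta^* + \varepsilon_{\tau(E)}$, which follows from $\bary_t = \beta^{*\top}\barx_t + \varepsilon_t$, and rearranging yields the ambient identity $\barX_{\tau(E)}^\top \barX_{\tau(E)} \left(\hatb_E - \beta^*\right) = \barX_{\tau(E)}^\top \varepsilon_{\tau(E)}$.

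The key structural step is to show that each row $\barx_t$ of $\barX_{\tau(E)}$ lies in $\cV_{\tau(E)} = \Sigma + \cD_{\tau(E)}$. I would decompose $\barx_t = x_t + \Delta_t$. For the modification, the closed-form solution of $M(\hatb,c_t,B_t)$ modifies a single coordinate $k$, so $\Delta_t$ is a scalar multiple of some $e_k$ with $k \in D_{\tau(E)}$, hence $\Delta_t \in \cD_{\tau(E)}$. For the unmodified feature, I would invoke the standard fact that a mean-zero random vector with covariance $\Sigma$ is supported on $\spn(f_1,\ldots,f_r)$: any $v$ orthogonal to $\spn(f_1,\ldots,f_r)$ satisfies $\E[(v^\top x)^2] = v^\top \Sigma v = 0$, so $v^\top x_t = 0$ almost surely and $x_t \in \Sigma$. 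Thus $\barx_t \in \cV_{\tau(E)}$ for every $t \leq \tau(E)$, which immediately gives $\barX_{\tau(E)}(\cV_{\tau(E)}) = \barX_{\tau(E)}$ and hence $\barX_{\tau(E)}(\cV_{\tau(E)})^\top \barX_{\tau(E)}(\cV_{\tau(E)}) = \barX_{\tau(E)}^\top \barX_{\tau(E)}$.

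It then remains to replace $\hatb_E - \beta^*$ by its projection $\hatb_E(\cV_{\tau(E)}) - \beta^*(\cV_{\tau(E)})$ on the left-hand side. Writing $\hatb_E - \beta^* = \left(\hatb_E(\cV_{\tau(E)}) - \beta^*(\cV_{\tau(E)})\right) + \left(\hatb_E(\cV_{\tau(E)}^\bot) - \beta^*(\cV_{\tau(E)}^\bot)\right)$, I would note that $\barX_{\tau(E)}$ annihilates any $z \in \cV_{\tau(E)}^\bot$, since each entry of $\barX_{\tau(E)} z$ equals $\barx_t^\top z = 0$ by $\barx_t \in \cV_{\tau(E)}$. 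Hence $\barX_{\tau(E)}^\top \barX_{\tau(E)} z = 0$ for all $z \in \cV_{\tau(E)}^\bot$, so the orthogonal-complement component contributes nothing, and combining the displayed identities produces exactly the claimed equation. The only delicate point is the support argument $x_t \in \spn(f_1,\ldots,f_r)$ for a possibly degenerate $\Sigma$; everything else is bookkeeping with projections and the normal equations.
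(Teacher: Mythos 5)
Your proof is correct and follows essentially the same route as the paper: derive the normal equations, observe that every row $\barx_t = x_t + \Delta_t$ lies in $\cV_{\tau(E)}$ so that $\barX_{\tau(E)} = \barX_{\tau(E)}\left(\cV_{\tau(E)}\right)$, and use the fact that $\barX_{\tau(E)}$ annihilates $\cV_{\tau(E)}^\bot$ to replace $\hatb_E - \beta^*$ by its projection. The only differences are cosmetic---you substitute $\barY_{\tau(E)} = \barX_{\tau(E)}\beta^* + \varepsilon_{\tau(E)}$ before projecting rather than after, and you supply the almost-sure support argument $x_t \in \spn(f_1,\ldots,f_r)$ via $v^\top \Sigma v = 0$, a detail the paper asserts without proof.
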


\begin{proof}
For simplicity of notations, we drop all $\tau(E)$ indices and subscripts in this proof. Remember that
\[
LSE = \argmin_{\beta} \left(\barX \beta - \barY \right)^\top \left(\barX \beta - \barY \right).
\]
Since $\hatb_E \in LSE$, it must satisfy the first order conditions given by
\[
2 \barX^\top \left(\barX \hatb_E - \barY\right) = 0,
\]
which can be rewritten as 
\[
\barX^\top \barX \hatb_E = \barX^\top \barY.
\]
Second, we note that for all $t$, $x_t \in \spn(f_1,\ldots,f_r)$ and $\Delta_t \in \spn\left(\left(e_k\right)_{k \in D }\right)$ (by definition of $D$). This immediately implies, in particular, that $\barx_t = x_t + \Delta_t \in \cV$. In turn, $\barx_t\left(\cV\right) = \barx_t$ for all $t$, and 
\[
\barX = \barX \left(\cV\right).
\]
As such, the first order condition can be written 
\[
\barX\left(\cV\right)^\top \barX\left(\cV\right) \hatb_E = \barX\left(\cV\right)^\top \barY.
\]
Now, we remark that
\begin{align*}
\barX\left(\cV\right)^\top \barX\left(\cV\right) \hatb_E
&= \sum_{t\in S} \barx_t\left(\cV\right) \barx_t\left(\cV\right)^\top \hatb_E
\\&=  \sum_{t\in S} \barx_t\left(\cV\right) \barx_t\left(\cV\right)^\top \hatb_E\left(\cV\right) +  \sum_{t\in S} \barx_t\left(\cV\right) \barx_t\left(\cV\right)^\top \hatb_E(\cV^\bot)
\\&= \sum_{t\in S} \barx_t\left(\cV\right) \barx_t\left(\cV\right)^\top \hatb_E\left(\cV\right) 
\\&= \barX\left(\cV\right)^\top \barX\left(\cV\right) \hatb_E\left(\cV\right),
\end{align*}
where the second-to-last equality follows from the fact that $\cV$ and $\cV^\bot$ are orthogonal, which immediately implies $\barx_t\left(\cV\right)^\top \hatb_E(\cV^\bot) = 0$ for all $t$. To conclude the proof, we note that $\barY = \barX^\top \beta^* + \varepsilon = \barX\left(\cV\right)^\top \beta^*\left(\cV\right) + \varepsilon$. Plugging this in the above equation, we obtain that
\[
\barX\left(\cV\right)^\top \barX\left(\cV\right) \hatb_E\left(\cV\right) 
= \barX\left(\cV\right)^\top \barX\left(\cV\right)^\top \beta^*\left(\cV\right) + \barX\left(\cV\right)^\top \varepsilon.
\]
This can be rewritten
\begin{align*}
\left(\barX\left(\cV\right)^\top \barX\left(\cV\right) \right) \left( \hatb_E\left(\cV\right) - \beta^*\left(\cV\right) \right) =\barX\left(\cV\right)^\top \varepsilon,
\end{align*}
which completes the proof.
\end{proof}

\paragraph{Upper-bounding the right-hand side of the first order conditions} We now use concentration to give an upper bound on a function of the right-hand side of the first order conditions, 
\[
\left( \hatb_E\left(\cV_{\tau(E)}\right) - \beta^*\left(\cV_{\tau(E)}\right) \right)^\top \barX_{\tau(E)} \left(\cV_{\tau(E)}\right)^\top \varepsilon_{\tau(E)}.
\]

\begin{lemma}\label{lem: upper_bound_norm}
With probability at least $1-\delta$,
\begin{align*}
&\left( \hatb_E\left(\cV_{\tau(E)}\right) - \beta^*\left(\cV_{\tau(E)}\right) \right)^\top \barX_{\tau(E)}\left(\cV_{\tau(E)}\right)^\top \varepsilon
\\&\leq \left\Vert  \hatb_E\left(\cV_{\tau(E)}\right) - \beta^*\left(\cV_{\tau(E)}\right) \right\Vert_2 \cdot K' \sqrt{d\tau(E) \log(2d/\delta)}.
\end{align*}
where $K'$ is a constant that only depends on the distribution of costs and the bound $\sigma$ on the noise.
\end{lemma}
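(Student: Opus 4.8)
The plan is to bound the quadratic form via Cauchy--Schwarz followed by a coordinatewise concentration argument. First I would apply Cauchy--Schwarz to peel off the factor $\left\Vert \hatb_E\left(\cV_{\tau(E)}\right) - \beta^*\left(\cV_{\tau(E)}\right) \right\Vert_2$, reducing the problem to bounding $\left\Vert \barX_{\tau(E)}\left(\cV_{\tau(E)}\right)^\top \varepsilon_{\tau(E)} \right\Vert_2$. Since this is a vector in $\reals^d$, I would control its $\ell_2$ norm by controlling each of its $d$ coordinates and summing: specifically, $\left\Vert \barX\left(\cV\right)^\top \varepsilon \right\Vert_2 \leq \sqrt{d} \max_{k \in [d]} \left\vert \left( \barX\left(\cV\right)^\top \varepsilon \right)(k) \right\vert$, so it suffices to bound each coordinate by $O(\sqrt{\tau(E)\log(d/\delta)})$ with high probability.

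The heart of the argument is then a martingale concentration bound on each coordinate $\left( \barX\left(\cV\right)^\top \varepsilon \right)(k) = \sum_{t=1}^{\tau(E)} \barx_t\left(\cV\right)(k)\, \varepsilon_t$. I would set $W_t = \barx_t\left(\cV\right)(k)\, \varepsilon_t$ and verify the hypotheses of Lemma~\ref{lem: azuma}. The key observation is that $\barx_t\left(\cV\right)(k)$ is determined by information available \emph{before} the noise $\varepsilon_t$ is drawn: the modified feature $\barx_t$ depends only on the unmodified draw $x_t$, the agent's cost/budget pair, and the posted $\hatb_{E-1}$, whereas $\varepsilon_t$ is fresh noise with $\E[\varepsilon_t \mid x_t] = 0$. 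Hence conditioning on the history $W_1,\dots,W_{t-1}$ (equivalently, on the past features, costs, and noises), the factor $\barx_t\left(\cV\right)(k)$ is fixed and $\E[W_t \mid W_1,\dots,W_{t-1}] = \barx_t\left(\cV\right)(k)\,\E[\varepsilon_t \mid \cdot] = 0$, giving the martingale-difference structure. For the bound $\left\vert W_t \right\vert \leq W_{max}$, I would note $\left\vert \varepsilon_t \right\vert \leq \sigma$ and that $\left\vert \barx_t\left(\cV\right)(k) \right\vert$ is bounded by a constant depending on $\sigma$, $\C$, and $\Sigma$ (since unmodified features lie in $[-1,1]^d$ and modifications $\Delta_t(k) = B_t/c_t(k)$ are bounded by the discrete, strictly positive cost/budget support of $\C$, and projection onto $\cV$ does not increase the relevant magnitude). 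Applying Lemma~\ref{lem: azuma} to each coordinate gives $\left\vert \sum_t W_t \right\vert \leq W_{max}\sqrt{2\tau(E)\log(2/\delta')}$ with probability at least $1-\delta'$.

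Finally I would union-bound over the $d$ coordinates, setting $\delta' = \delta/d$, so that with probability at least $1-\delta$ every coordinate is bounded by $W_{max}\sqrt{2\tau(E)\log(2d/\delta)}$ simultaneously. Combining with the $\sqrt{d}$ factor from the coordinatewise-to-$\ell_2$ step yields $\left\Vert \barX\left(\cV\right)^\top \varepsilon \right\Vert_2 \leq K'\sqrt{d\,\tau(E)\log(2d/\delta)}$ for an appropriate constant $K'$ absorbing $W_{max}$ and the numerical factors, which after reinserting the Cauchy--Schwarz factor is exactly the claimed bound.

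I expect the main obstacle to be the careful verification of the martingale-difference property, i.e., making precise that $\barx_t\left(\cV\right)(k)$ is measurable with respect to the conditioning $\sigma$-field while $\varepsilon_t$ remains conditionally mean-zero. One subtlety is that $\cV_{\tau(E)}$ itself depends on the \emph{entire} modification history up to $\tau(E)$, including rounds after $t$, so the projection direction is not adapted to the filtration at time $t$; I would handle this by fixing the realized subspace $\cV_{\tau(E)}$ (equivalently, working on the event that the modified set of directions is a particular one) and running the concentration argument for that fixed subspace, or by noting that $\barx_t \in \cV_{\tau(E)}$ deterministically so that $\barx_t\left(\cV_{\tau(E)}\right) = \barx_t$ and the projection is in fact a no-op on the data rows, sidestepping the measurability concern entirely. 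Establishing the uniform bound $W_{max}$ from the discreteness and positivity of $\C$ is routine once the boundedness of $\Delta_t$ is made explicit.
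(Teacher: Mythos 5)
Your proposal follows essentially the same route as the paper's proof: Cauchy--Schwarz to peel off the $\ell_2$ factor, Azuma (Lemma~\ref{lem: azuma}) applied coordinatewise to $\sum_t \barx_t(k)\varepsilon_t$ with $W_{max}$ determined by the bounded budget-to-cost ratios and $|\varepsilon_t|\leq\sigma$, and a union bound over the $d$ coordinates; your observation that the projection onto $\cV_{\tau(E)}$ is a no-op on the data rows (since $\barx_t \in \cV_{\tau(E)}$) is exactly how the paper sidesteps the measurability concern. The one imprecision is your claim that $\barx_t\left(\cV_{\tau(E)}\right)(k)$ is ``fixed'' given $W_1,\dots,W_{t-1}$ --- it is not, since it depends on the fresh draws $x_t$ and $(c_t,B_t)$ at round $t$; the paper instead splits $\barx_t(k) = x_t(k) + \Delta_t(k)$ and uses $\E[\varepsilon_t \mid x_t]=0$ together with the independence of $\varepsilon_t$ from $\Delta_t$ to carry out the conditional-expectation computation rigorously, which is the precise version of the ingredients you already list.
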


\begin{proof}
Pick any $k \in [d]$, and define $W_t = \barx_t(k) \varepsilon_t$. First, we remark that 
\[
\left\vert \barx_t(k) \right\vert 
\leq \left\vert x_t(k) \right\vert + \left\vert \Delta_t(k) \right\vert
\leq 1 + \max_{k \in [d],~i \in [l]} \frac{B^i}{c^i(k)}.
\]
In turn, $\left\vert W_t \right\vert \leq K'$ where 
\[
K' \triangleq \left( 1 + \max_{k \in [d],~i \in [l]} \frac{B^i}{c^i(k)}\right) \sigma.
\] Further, note that both $x_t(k)$ and $\varepsilon_t$ are independent of the history of play up through time $t-1$, hence of $W_1,\ldots,W_{t-1}$, and that $\varepsilon_t$ is further independent of $\Delta_t$ (the distribution of $\Delta_t$ is a function of the currently posted $\hatb_{E-1}$ only, which only depends on the previous time steps). Noting that if $A,B,C$ are random variables, we have 
\begin{align*}
\E_{A,B}\left[AB|C=c\right]  
&= \sum_{a} \sum_{b} ab \Pr\left[A = a, B= b | C=c\right]
\\&=\sum_a \sum_b ab \Pr\left[A = a | B=b,C=c\right] \Pr\left[B=b |C=c\right]
\\&= \sum_b b \left(\sum_a a \Pr\left[A = a | B=b,C=c\right]\right) \Pr\left[B=b |C=c\right]
\\&=\sum_b b \E_A\left[A|B=b,C=c\right] \Pr\left[B=b |C=c\right]
\\&= \E_B \left[\E_A\left[A|B,C=c\right] B | C=c\right],
\end{align*}
and applying this with $A = \varepsilon_t$, $B = \Delta_t(k)$, $C = W_1 \cap \ldots \cap W_{t-1}$, we obtain
\begin{align*}
\E \left[W_t \middle| W_{t-1},\ldots,W_1\right]
&= \E \left[\barx_t(k) \varepsilon_t \middle| W_{t-1},\ldots,W_1 \right] 
\\&= \E \left[x_t(k) \varepsilon_t \middle| W_{t-1},\ldots,W_1 \right] + \E \left[\Delta_t(k) \varepsilon_t \middle| W_{t-1},\ldots,W_1 \right]
\\&= \E \left[x_t(k) \varepsilon_t \right] + \E_{\Delta_t} \left[ \E_{\varepsilon_t} \left[\varepsilon_t \middle| \Delta_t(k),W_{t-1},\ldots,W_1 \right] \cdot \Delta_t(k) \middle| W_{t-1},\ldots,W_1\right]
\\& = \E_{x_t} \left[x_t(k) \cdot \E_{\varepsilon} \left[ \varepsilon_t | x_t(k) \right]\right] + \E_{\Delta_t} \left[\Delta_t(k) \cdot \E_{\varepsilon_t} \left[\varepsilon_t \right] \middle| W_{t-1},\ldots,W_1\right]
\\& = 0,
\end{align*}
since $\E_{\varepsilon_t} \left[\varepsilon_t \right] = 0$ and $\E_{\varepsilon} \left[ \varepsilon_t | x_t(k) \right] = 0$. Hence, we can apply Lemma~\ref{lem: azuma} and a union bound over all $d$ features to show that with probability at least $1-\delta$, 
\[
\sum_{t=1}^{\tau(E)} \barx_t(k) \varepsilon_t \geq - K' \sqrt{2\tau(E) \log(2d/\delta)}~~\forall k \in [d].
\]
By Cauchy-Schwarz, we have
\begin{align*}
\left( \hatb_E\left(\cV\right) - \beta^*\left(\cV\right) \right)^\top \sum_{t=1}^{\tau(E)} \barx_t \varepsilon_t
&\leq \left\Vert  \hatb_E\left(\cV\right) - \beta^*\left(\cV\right) \right\Vert_2 \cdot \left\Vert \sum_{t=1}^{\tau(E)} \barx_t \varepsilon_t \right\Vert_2
\\&\leq \left\Vert  \hatb_E\left(\cV\right) - \beta^*\left(\cV\right) \right\Vert_2 \sqrt{\sum_{k=1}^d \left(\sum_t \barx_t(k) \varepsilon_t\right)^2}
\\&\leq \left\Vert  \hatb_E\left(\cV\right) - \beta^*\left(\cV\right) \right\Vert_2 \cdot K'\sqrt{2 d\tau(E) \log(2d/\delta)}.
\end{align*}
\end{proof}

\paragraph{Strong convexity of the mean-squared error in sub-space $\cV(\tau(E))$}

We give a lower bound on the eigenvalues of $\barX^\top \barX$ on sub-space $\cV(\tau(E))$, so as to show that at time $\tau(E)$, any least square solution $\hatb_E$ satisfies
\begin{align*}
\left( \hatb_E\left(\cV_{\tau(E)}\right) - \beta^*\left(\cV_{\tau(E)}\right) \right)^\top \barX_{\tau(E)}\left(\cV_{\tau(E)}\right)^\top \barX_{\tau(E)}\left(\cV_{\tau(E)}\right) \left( \hatb_E\left(\cV_{\tau(E)}\right) - \beta^*\left(\cV_{\tau(E)}\right) \right) 
\\\geq \Omega(n) \left\Vert \hatb_E\left(\cV_{\tau(E)}\right) - \beta^*\left(\cV_{\tau(E)}\right) \right\Vert_2^2.
\end{align*}

To do so, we will need the following concentration inequalities:
\begin{lemma}\label{lem: concentration}
Suppose $\E\left[x_t\right] = 0$. Fix $\tau(E)= En$ for some $E \in \mathbb{N}$. With probability at least $1-\delta$, we have that
\begin{align*}
\sum_{t=1}^{\tau(E)} z^\top x_t x_t^\top z \geq \left(\lambda_r \tau(E) - 2 r d \sqrt{\tau(E) \log(6r/\delta)} \right) \Vert z \Vert_2^2~~\forall z \in \Sigma,
\end{align*}
and
\begin{align*}
\sum_{t=1}^{\tau(E)} z^\top \Delta_t \Delta_t^\top z 
\geq  \left( \min_{i,k} \left\{ \pi^i \left(\frac{B^i}{c^i(k)}\right)^2 \right\} n - \left(\max_{i,k} \left\{ \frac{B^i}{c^i(k)}\right\}\right)^2 \sqrt{2 n \log(6d/\delta)}\right)\Vert z \Vert_2^2 ~~\forall z \in \cD_{\tau(E)}
\end{align*}
and
\begin{align*}
\sum_{t=1}^{\tau(E)} z^\top x_t \Delta_t^\top z \geq -2 \max_{i,k} \left\{ \frac{B^i}{c^i(k)}\right\} d \sqrt{\tau(E) \log(6d/\delta)} \Vert z \Vert_2^2~~\forall z \in \mathbb{R}^d.
\end{align*}
\end{lemma}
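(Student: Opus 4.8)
The plan is to write each of the three sums as a quadratic form $z^\top A z$ in a random matrix $A$ --- respectively $A=\sum_t x_t x_t^\top$, $A=\sum_t \Delta_t \Delta_t^\top$, and $A=\sum_t x_t \Delta_t^\top$ --- to identify the leading deterministic term from $\E[A]$, and then to control the fluctuation $A-\E[A]$ by applying Lemma~\ref{lem: azuma} entrywise and converting the entrywise bounds into a bound on the quadratic form, at the cost of a dimension factor. Two facts drive the martingale structure throughout: the unmodified features $x_t$ are i.i.d., mean-zero, bounded (entries in $[-1,1]$), and independent of the history; and the modification $\Delta_t$ is a deterministic function of the previously posted $\hatb_{E-1}$ and of the exogenous draw $(c_t,B_t)$, so that $\Delta_t$ is independent of $x_t$ and the coordinate-to-modify map is fixed within an epoch.

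For the first inequality I would expand $z\in\Sigma$ in the orthonormal eigenbasis $f_1,\dots,f_r$, writing the form as $\tilde z^\top M\tilde z$ with $M_{ab}=\sum_t (f_a^\top x_t)(f_b^\top x_t)$ and $\|\tilde z\|_2=\|z\|_2$. Since $\E[x_tx_t^\top]=\Sigma$ and the $f_a$ diagonalize $\Sigma$, $\E[M]$ is diagonal with entries $\tau(E)\lambda_a$, giving the leading term $\tau(E)\lambda_r\|z\|_2^2$ after bounding below by the smallest nonzero eigenvalue on the range of $\Sigma$. The centered increments $(f_a^\top x_t)(f_b^\top x_t)-\E[\cdot]$ are conditionally mean-zero and $O(d)$-bounded (as $|f_a^\top x_t|\le\sqrt d$), so Lemma~\ref{lem: azuma} controls each $|M_{ab}-\E M_{ab}|$; a union bound over the $O(r^2)$ entries together with $|\tilde z^\top(M-\E M)\tilde z|\le r\,\|z\|_2^2\max_{a,b}|M_{ab}-\E M_{ab}|$ yields the $2rd\sqrt{\tau(E)\log(6r/\delta)}$ fluctuation. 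The third inequality is analogous: writing $\Delta_t=\delta_t e_{k_t}$ with $|\delta_t|\le\max_{i,k}B^i/c^i(k)$, the matrix $A=\sum_t x_t\Delta_t^\top$ has entries $A_{jk}=\sum_{t:k_t=k}\delta_t x_t(j)$; for fixed $(j,k)$ these summands are conditionally mean-zero (using $\E[x_t]=0$ and $x_t$ independent of $(\delta_t,k_t)$) and bounded by $\max_{i,k}B^i/c^i(k)$, so Azuma plus a union bound controls every entry, and bounding $|z^\top A z|$ via Cauchy--Schwarz ($\|z\|_1\le\sqrt d\|z\|_2$ and $\|A_{\cdot k}\|_2\le\sqrt d\max_j|A_{jk}|$) produces the $2\max_{i,k}(B^i/c^i(k))\,d\sqrt{\tau(E)\log(6d/\delta)}$ bound for all $z\in\reals^d$.

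The second inequality is the crux and, I expect, the main obstacle, because here $A=\sum_t\Delta_t\Delta_t^\top$ is governed entirely by the endogenous choice of which coordinate each agent modifies, a choice that depends on the learner's history-dependent model. The structural observation that resolves this is that within a single epoch the posted $\hatb_{E-1}$ is fixed, so the map from an agent's cost-type $i$ to its modified coordinate is fixed and deterministic, and all type-$i$ agents in that epoch modify the same coordinate $k$ by the same magnitude $B^i/c^i(k)$. Because each $\Delta_t$ is supported on one coordinate, $A$ is diagonal; so for each $k\in\cD_{\tau(E)}$ I would fix one epoch in which $k$ is modified, fix a responsible type $i$, and lower bound the $k$-th diagonal entry by $N_i\,(B^i/c^i(k))^2$, where $N_i$ counts the type-$i$ agents in that epoch. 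Conditioned on the filtration at the epoch's start the type indicators are fresh i.i.d.\ Bernoulli$(\pi^i)$ draws, so applying Lemma~\ref{lem: azuma} to the mean-zero, $(\max_{i,k}B^i/c^i(k))^2$-bounded increments $(B^i/c^i(k))^2(\pi^i-\mathbbm{1}[\mathrm{type}(t)=i])$ over the $n$ steps of the epoch gives $N_i(B^i/c^i(k))^2\ge\pi^i(B^i/c^i(k))^2 n-(\max_{i,k}B^i/c^i(k))^2\sqrt{2n\log(6d/\delta)}$ after allocating $\delta/(3d)$ per feature. Taking the minimum of the leading coefficient over $i,k$ and union-bounding over the $d$ features gives the diagonal lower bound, hence the quadratic-form bound on $\cD_{\tau(E)}$; the appearance of $n$ rather than $\tau(E)$ is exactly because a correlated non-causal feature may be modified in only a single epoch.
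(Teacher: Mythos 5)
Your proposal is correct and follows essentially the same route as the paper's proof: eigenbasis expansion with entrywise concentration and a Cauchy--Schwarz conversion for the $x_tx_t^\top$ term, exploiting diagonality of $\sum_t \Delta_t\Delta_t^\top$ with per-epoch i.i.d.\ concentration and a union bound over features for the modification term, and independence of $x_t$ from $\Delta_t$ plus Azuma entrywise for the cross term. The only (immaterial) deviation is in the second inequality, where you concentrate the count of type-$i$ agents via Bernoulli indicators, whereas the paper applies Hoeffding directly to $\sum_{t\in\phi(k)}\Delta_t(k)^2$ and lower-bounds $\E[\Delta_t(k)^2]\geq \pi^i\left(B^i/c^i(k)\right)^2$; both yield the identical bound.
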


\begin{proof}
Deferred to Appendix~\ref{app: concentration}.
\end{proof}

We will also need the following statement on the norm of the projections of any $z \in \cV$ to $\cD$ and $\Sigma$: 

\begin{lemma}\label{lem: eigenvalue_renorm}
Let
\begin{align*}
    \lambda(\cD,\Sigma) = \inf_{z \in \cD + \Sigma}~& \Vert z(\cD) \Vert_2  + \Vert z(\Sigma) \Vert_2 
    \\\text{s.t.}~~~~&\Vert z \Vert_2 = 1. 
\end{align*}
Then, $\lambda(\cD,\Sigma) > 0$.
\end{lemma}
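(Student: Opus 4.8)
The plan is to recognize $\lambda(\cD,\Sigma)$ as the minimum of a continuous, strictly positive function over a compact set, so that the infimum is attained and hence strictly positive. Concretely, set $g(z) = \Vert z(\cD) \Vert_2 + \Vert z(\Sigma) \Vert_2$ and let $S = \{z \in \cD + \Sigma : \Vert z \Vert_2 = 1\}$. Since $\cD + \Sigma$ is a finite-dimensional linear subspace of $\reals^d$, the unit sphere $S$ is compact, and $g$ is continuous because orthogonal projection onto a fixed subspace is a linear (hence continuous) map and the Euclidean norm is continuous. By the extreme value theorem, $g$ attains its infimum over $S$ at some $z^* \in S$, so it suffices to argue that $g$ is strictly positive everywhere on $S$; then $\lambda(\cD,\Sigma) = g(z^*) > 0$.

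The crux is this strict positivity, which I would establish by an orthogonality argument. Suppose toward a contradiction that $g(z) = 0$ for some $z \in S$. Then both projections vanish: $z(\cD) = 0$ and $z(\Sigma) = 0$. But $z(\cD) = 0$ means exactly that $z \perp \cD$, and likewise $z(\Sigma) = 0$ means $z \perp \Sigma$; consequently $z$ is orthogonal to every vector of the form $u + v$ with $u \in \cD$ and $v \in \Sigma$, i.e. $z \perp (\cD + \Sigma)$. Since $z \in \cD + \Sigma$ by assumption, this forces $\langle z, z \rangle = 0$, hence $z = 0$, contradicting $\Vert z \Vert_2 = 1$. Therefore $g > 0$ on $S$. (In the degenerate case $\cD + \Sigma = \{0\}$ the set $S$ is empty and the infimum is $+\infty$, so the conclusion holds trivially.)

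There is essentially no hard technical step here: the only point to watch is that $z(\cD)$ and $z(\Sigma)$ are orthogonal projections onto $\cD$ and $\Sigma$ \emph{themselves} (not onto their orthogonal complements), so that their simultaneous vanishing is equivalent to $z$ being orthogonal to both subspaces. The decomposition $z = u + v$ witnessing $z \in \cD + \Sigma$ need not be the orthogonal one and plays no role beyond certifying membership; all the work is done by the observation that a nonzero vector cannot be simultaneously orthogonal to both summands and lie in their sum. Compactness then upgrades pointwise strict positivity into a uniform positive lower bound, which is precisely what the downstream argument needs in order to relate $\Vert z(\cD)\Vert_2 + \Vert z(\Sigma)\Vert_2$ back to $\Vert z \Vert_2$ and combine it with the per-subspace concentration bounds of Lemma~\ref{lem: concentration}.
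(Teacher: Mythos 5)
Your proof is correct and follows essentially the same route as the paper's: continuity of the projections plus compactness of the unit sphere in $\cD + \Sigma$ gives attainment via the extreme value theorem, and the same orthogonality contradiction (vanishing projections force $z \perp \cD + \Sigma$, hence $z = 0$) yields strict positivity. Your explicit handling of the degenerate case $\cD + \Sigma = \{0\}$ is a minor addition the paper omits, but otherwise the arguments coincide.
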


\begin{proof}
With respect to the Euclidean metric, the objective function is continuous in $z$ (the orthogonal projection operators are linear hence continuous functions of $z$ and $z \to \Vert z \Vert_2$ also is a continuous function), and its feasible set is compact (as it is a sphere in a bounded-dimensional space over real values). By the extreme value theorem, the optimization problem admits an optimal solution, i.e., there exists $z^*$ with $\Vert z^* \Vert_2 = 1$ such that $\lambda(\cD,\Sigma) = \Vert z^*(\cD) \Vert_2  + \Vert z^*(\Sigma) \Vert_2$. Now, supposing $\lambda(\cD,\Sigma) \leq 0$, it must necessarily be the case that $z(\cD) = 0$, $z(\Sigma) = 0$. In particular, this means $z$ is orthogonal to both $\cD$ and $\Sigma$. In turn, $z$ must be orthogonal to every vector in $\cD + \Sigma$; since $z \in \cD + \Sigma$, this is only possible when $z = 0$, contradicting $\Vert z \Vert_2 = 1$.
\end{proof}

We can now move onto the proof of our lower bound for 
\[
\left( \hatb_E\left(\cV_{\tau(E)}\right) - \beta^*\left(\cV_{\tau(E)}\right) \right)^\top \barX_{\tau(E)}\left(\cV_{\tau(E)}\right)^\top \barX_{\tau(E)}\left(\cV_{\tau(E)}\right) \left( \hatb_E\left(\cV_{\tau(E)}\right) - \beta^*\left(\cV_{\tau(E)}\right) \right).
\]
\begin{corollary}\label{cor: lower_bound_norm}
Fix $\tau(E)= En$ for some $E \in \mathbb{N}$. With probability at least $1- \delta$,
\begin{align*}
&\left( \hatb_E\left(\cV_{\tau(E)}\right) - \beta^*\left(\cV_{\tau(E)}\right) \right)^\top \barX_{\tau(E)}\left(\cV_{\tau(E)}\right)^\top \barX_{\tau(E)}\left(\cV_{\tau(E)}\right) \left( \hatb_E\left(\cV_{\tau(E)}\right) - \beta^*\left(\cV_{\tau(E)}\right) \right)
\\&\geq 
\left(\frac{\lambda n}{2} - \kappa' d^2 \sqrt{\tau(E)\log(6d/\delta)}\right) 
\left\Vert \hatb_E\left(\cV_{\tau(E)}\right) - \beta^*\left(\cV_{\tau(E)}\right) \right\Vert_2^2,
\end{align*}
for some constants $\kappa',~\lambda$ that only depend on $\sigma$, $\C$, and $\Sigma$, with $\lambda > 0$.
\end{corollary}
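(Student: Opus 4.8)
The plan is to set $w \triangleq \hatb_E(\cV_{\tau(E)}) - \beta^*(\cV_{\tau(E)})$, which lies in $\cV_{\tau(E)} = \Sigma + \cD_{\tau(E)}$, and to observe that since every $\barx_t \in \cV_{\tau(E)}$ we have $\barX_{\tau(E)}(\cV_{\tau(E)}) = \barX_{\tau(E)}$, so the quantity to lower bound is exactly $\sum_{t=1}^{\tau(E)} (\barx_t^\top w)^2$. Writing $\barx_t = x_t + \Delta_t$ with $x_t \in \Sigma$ and $\Delta_t \in \cD_{\tau(E)}$, and using $x_t^\top w = x_t^\top w(\Sigma)$ and $\Delta_t^\top w = \Delta_t^\top w(\cD_{\tau(E)})$ (because $x_t$ is orthogonal to $\Sigma^\bot$ and $\Delta_t$ is orthogonal to $\cD_{\tau(E)}^\bot$), I would expand the square into three sums: $\sum_t (x_t^\top w(\Sigma))^2$, $\sum_t (\Delta_t^\top w(\cD_{\tau(E)}))^2$, and the cross term $2\sum_t w^\top x_t \Delta_t^\top w$.

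Next I would work on the event where all three inequalities of Lemma~\ref{lem: concentration} hold simultaneously, which the lemma guarantees with probability at least $1-\delta$ (the factor $6$ inside the logarithms reflects the union over the three statements), and apply each to the appropriate vector. I would use the first bound with $z = w(\Sigma) \in \Sigma$ to get $\sum_t (x_t^\top w(\Sigma))^2 \geq \bigl(\lambda_r \tau(E) - 2rd\sqrt{\tau(E)\log(6r/\delta)}\bigr)\Vert w(\Sigma)\Vert_2^2$; the second with $z = w(\cD_{\tau(E)}) \in \cD_{\tau(E)}$ for the $\Delta$ sum; and—this is the delicate step—the third with $z = w$ itself, which reproduces precisely the cross term, since $w^\top x_t \Delta_t^\top w = (x_t^\top w(\Sigma))(\Delta_t^\top w(\cD_{\tau(E)}))$. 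Combining these yields a bound of the form $c_1 \Vert w(\Sigma)\Vert_2^2 + c_2 \Vert w(\cD_{\tau(E)})\Vert_2^2 - c_3 \Vert w\Vert_2^2$, where the leading parts of $c_1 = \Theta(\tau(E))$ and $c_2 = \Theta(n)$ are positive and $c_3 = O(d\sqrt{\tau(E)\log(6d/\delta)})$.

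To convert the projected norms back into $\Vert w\Vert_2^2$, I would first use $\tau(E) \geq n$ to bound the two leading coefficients below by $\min(\lambda_r, a_2)\,n$, where $a_2 = \min_{i,k}\{\pi^i (B^i/c^i(k))^2\}$, and then invoke Lemma~\ref{lem: eigenvalue_renorm} together with $(a+b)^2 \leq 2(a^2+b^2)$ to get $\Vert w(\Sigma)\Vert_2^2 + \Vert w(\cD_{\tau(E)})\Vert_2^2 \geq \tfrac12 \lambda(\cD_{\tau(E)},\Sigma)^2 \Vert w\Vert_2^2$. Setting $\lambda \triangleq \min(\lambda_r, a_2)\,\lambda(\cD,\Sigma)^2$, which is strictly positive by Lemma~\ref{lem: eigenvalue_renorm}, produces a leading term at least $\tfrac{\lambda n}{2}\Vert w\Vert_2^2$. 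For the error terms, I would bound each contribution using that orthogonal projections are contractions ($\Vert w(\Sigma)\Vert_2, \Vert w(\cD_{\tau(E)})\Vert_2 \leq \Vert w\Vert_2$), that $n \leq \tau(E)$, and that $r \leq d$ so $\log(6r/\delta) \leq \log(6d/\delta)$; the dominant $d$-dependence arises from the $2rd\sqrt{\cdot}$ term, which is $O(d^2\sqrt{\tau(E)\log(6d/\delta)})$, so that all error contributions can be absorbed into $\kappa' d^2 \sqrt{\tau(E)\log(6d/\delta)}\,\Vert w\Vert_2^2$ for a constant $\kappa'$ depending only on $\sigma$, $\C$, and $\Sigma$.

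The main obstacle is the cross term: one must recognize that the third inequality of Lemma~\ref{lem: concentration}, applied to the full vector $w$ rather than to its two projections, reconstructs exactly the mixed term in the expansion, so that no separate handling of the interaction between the subspaces $\Sigma$ and $\cD_{\tau(E)}$ is required beyond the already-established positivity $\lambda(\cD,\Sigma) > 0$. Everything else is bookkeeping on the orders in $n$, $\tau(E)$, $d$, and $r$.
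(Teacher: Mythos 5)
Your proof is correct and, at its core, is the paper's proof: the same expansion of $w^\top \barX_{\tau(E)}^\top\barX_{\tau(E)}w$ into the $x_tx_t^\top$ part, the $\Delta_t\Delta_t^\top$ part, and the cross term; the same three applications of Lemma~\ref{lem: concentration} (first inequality at $w(\Sigma)$, second at $w(\cD_{\tau(E)})$, third at the full vector $w$, which indeed reproduces exactly the cross term); and the same use of Lemma~\ref{lem: eigenvalue_renorm} to relate projected norms back to $\Vert w\Vert_2$. The one structural difference is the endgame: the paper argues by dichotomy---either $\Vert w(\Sigma)\Vert_2$ or $\Vert w(\cD_{\tau(E)})\Vert_2$ is at least $\tfrac{\lambda(\Sigma)}{2}\Vert w\Vert_2$---and in each case discards the other (nonnegative) quadratic term, whereas you keep both terms, lower bound both leading coefficients by $\min(\lambda_r,a_2)\,n$ using $\tau(E)\geq n$, and combine via $\Vert w(\Sigma)\Vert_2^2+\Vert w(\cD_{\tau(E)})\Vert_2^2\geq\tfrac{1}{2}\bigl(\Vert w(\Sigma)\Vert_2+\Vert w(\cD_{\tau(E)})\Vert_2\bigr)^2\geq\tfrac{1}{2}\lambda(\cD_{\tau(E)},\Sigma)^2\Vert w\Vert_2^2$. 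Your variant is actually a bit cleaner: it avoids the case split, and it carries the correct squared constant, whereas the paper's case analysis silently upgrades $\lambda(\Sigma)^2/4$ to $\lambda(\Sigma)/2$ (a harmless constant-level slip, since the constants are unspecified). One patch is required, though: as written, your $\lambda=\min(\lambda_r,a_2)\,\lambda(\cD_{\tau(E)},\Sigma)^2$ depends on the realized set $D_{\tau(E)}$ of modified features, which is random, while the corollary requires $\lambda$ to depend only on $\sigma$, $\C$, $\Sigma$. Do what the paper does: replace $\lambda(\cD_{\tau(E)},\Sigma)$ by $\lambda(\Sigma)\triangleq\min_{D\subset[d]}\lambda(\cD,\Sigma)$, which is still strictly positive because there are only finitely many subsets $D$ and each $\lambda(\cD,\Sigma)>0$ by Lemma~\ref{lem: eigenvalue_renorm}.
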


\begin{proof}
Since it is clear from context, we drop all $\tau(E)$ subscripts in the notation of this proof. First, we remark that
\begin{align*}
z^\top \barX^\top \barX z 
&= \sum_t z^\top \barx_t \barx_t^\top z
\\&= \sum_t z^\top x_t x_t^\top z + \sum_t z^\top \Delta_t \Delta_t^\top z + 2 \sum_t z^\top \Delta_t z^\top x_t.
\end{align*}
We have by Lemma~\ref{lem: eigenvalue_renorm} that for all $z \in \cV = \cD + \Sigma$,
\[
\Vert z(\cD)\Vert_2 + \Vert z(\Sigma)\Vert_2
\geq \lambda(\cD,\Sigma) \Vert z \Vert_2.
\]
Let $\lambda(\Sigma) \triangleq \min_{D \subset [d]} \lambda(\cD,\Sigma)$. Since there are finitely many subsets $D$ of $[d]$ (and corresponding sub-spaces $\cD$) and since for all such subsets, $\lambda(\cD,\Sigma) > 0$, we have that $\lambda(\Sigma) > 0$. Further, 
\[
\Vert z(\cD)\Vert_2 + \Vert z(\Sigma)\Vert_2
\geq \lambda(\Sigma) \Vert z \Vert_2.
\]
Therefore, it must be the case that either $\Vert z(\cD)\Vert_2 
\geq \frac{\lambda(\Sigma)}{2} \Vert z \Vert_2$ or $\Vert z(\Sigma)\Vert_2 
\geq \frac{\lambda(\Sigma)}{2} \Vert z \Vert_2$. We divide our proof into the corresponding two cases:

\begin{enumerate}
\item The first case is when $\Vert z(\Sigma) \Vert_2 \geq  \frac{\lambda(\Sigma)}{2} \Vert z \Vert_2$. Then, note that since $z^\top \Delta_t \Delta_t^\top z \geq 0$  always, we have
\begin{align*}
\sum_t z^\top \barx_t \barx_t^\top z
&\geq \sum_t z^\top x_t x_t^\top z + 2 \sum_t z^\top \Delta_t z^\top x_t
\\&=  \sum_t z(\Sigma)^\top x_t x_t^\top z(\Sigma) + 2 \sum_t z^\top \Delta_t z^\top x_t,
\end{align*}
where the last equality follows from the fact that $x_t \in \Sigma$ and $z = z(\Sigma) + z(\Sigma^\bot)$. By Lemma~\ref{lem: concentration}, we get that for some constant $C_1$ that depends only on $\C$, 
\begin{align*}
&\sum_t z^\top \barx_t \barx_t^\top z
\\&\geq \left(\lambda_r \tau(E) - 2 r d \sqrt{\tau(E) \log(6r/\delta)} \right) \Vert z(\Sigma) \Vert_2^2
- C_1 d  \sqrt{\tau(E) \log(6d/\delta)} \Vert z \Vert_2^2 
\\& \geq
\left(\frac{\lambda(\Sigma) \lambda_r}{2} \tau(E) - \lambda(\Sigma) r d \sqrt{\tau(E) \log(6r/\delta)}
- C_1 d \sqrt{\tau(E) \log(6d/\delta)}\right) \Vert z \Vert_2^2
\\&\geq
\left(\frac{\lambda(\Sigma) \lambda_r}{2} \tau(E) - \lambda(\Sigma) d^2 \sqrt{\tau(E) \log(6d/\delta)}
- C_1 d \sqrt{\tau(E) \log(6d/\delta)}\right) \Vert z \Vert_2^2.
\end{align*}
(The second step assumes $\lambda_r \tau(E) - 2 r d \sqrt{\tau(E) \log(6r/\delta)} \geq 0$. When this is negative, the bound trivially holds as $\sum_t z^\top \barx_t \barx_t^\top z \geq 0$.)

\item The second case arises when  $\Vert z(\cD) \Vert_2 \geq  \frac{\lambda(\Sigma)}{2} \Vert z \Vert_2$. Note that
\begin{align*}
\sum_t z^\top \barx_t \barx_t^\top z
&\geq \sum_t z^\top \Delta_t \Delta_t^\top z + 2 \sum_t z^\top \Delta_t z^\top x_t
\\&=  \sum_t z(\cD)^\top \Delta_t \Delta_t^\top z(\cD) + 2 \sum_t z^\top \Delta_t z^\top x_t,
\end{align*}
as $\Delta_t \in \cD$ and $z = z(\cD) + z(\cD^\bot)$. By Lemma~\ref{lem: concentration}, it follows that for some constants $C_2,~C_3$ that only depend on $\C$, 
\begin{align*}
&\sum_t z^\top \barx_t \barx_t^\top z
\\&\geq 
\left(n \min_{i,k} \left\{\pi^i \left(\frac{B^i}{c^i(k)}\right)^2 \right\} - C_2 \sqrt{n \log(6d/\delta)} \right) \left\Vert z(\cD) \right\Vert_2^2
- C_3 d \sqrt{\tau(E) \log(6d/\delta)} \Vert z \Vert_2^2 
\\&\geq 
\left(\frac{\lambda(\Sigma) n}{2} \min_{i,k} \left\{\pi^i \left(\frac{B^i}{c^i(k)}\right)^2\right\} - \frac{\lambda(\Sigma) C_2}{2} \sqrt{n \log(6d/\delta)} - C_3 d \sqrt{\tau(E) \log(6d/\delta)}\right) \Vert z \Vert_2^2
\\&\geq 
\left(\frac{\lambda(\Sigma) n}{2} \min_{i,k} \left\{\pi^i \left(\frac{B^i}{c^i(k)}\right)^2\right\} - \frac{\lambda(\Sigma) C_2}{2} \sqrt{\tau(E) \log(6d/\delta)} - C_3 d \sqrt{\tau(E) \log(6d/\delta)}\right) \Vert z \Vert_2^2.
\end{align*}
\end{enumerate}
Noting that by definition $\lambda_r > 0$ and $\min_{i,k} \left\{\pi^i \left(\frac{B^i}{c^i(k)}\right)^2\right\} > 0$, and picking the worse of the two above bounds on $\sum_t z^\top \barx_t \barx_t^\top z$ concludes the proof with 
\[
\lambda = \frac{\lambda(\Sigma)}{2} \min \left(\lambda_r,\min_{i,k} \left\{\pi^i \left(\frac{B^i}{c^i(k)}\right)^2\right\}\right) > 0.
\]
\end{proof}

We can now prove Theorem~\ref{thm: accurate_recovery}. By Lemma~\ref{lem: FOC}, we have that 
\[
\left(\barX_{\tau(E)}\left(\cV_{\tau(E)}\right)^\top \barX_{\tau(E)}\left(\cV_{\tau(E)}\right) \right) \left( \hatb_E\left(\cV_{\tau(E)}\right) - \beta^*\left(\cV_{\tau(E)}\right) \right) =\barX_{\tau(E)}\left(\cV_{\tau(E)}\right)^\top \varepsilon_{\tau(E)},
\]
which immediately yields
\begin{align*}
&\left( \hatb_E\left(\cV_{\tau(E)}\right) - \beta^*\left(\cV_{\tau(E)}\right) \right)^\top \left(\barX_{\tau(E)}\left(\cV_{\tau(E)}\right)^\top \barX_{\tau(E)}\left(\cV_{\tau(E)}\right) \right) \left( \hatb_E\left(\cV_{\tau(E)}\right) - \beta^*\left(\cV_{\tau(E)}\right) \right) 
\\&= \left( \hatb_E\left(\cV_{\tau(E)}\right) - \beta^*\left(\cV_{\tau(E)}\right) \right)^\top \barX\left(\cV_{\tau(E)}\right)^\top \varepsilon_{\tau(E)}
\end{align*}
by performing matrix multiplication with $\left( \hatb_E\left(\cV_{\tau(E)}\right) - \beta^*\left(\cV_{\tau(E)}\right) \right)^\top$ on both sides on the first-order conditions. Further, by Lemma~\ref{lem: upper_bound_norm}, Corollary~\ref{cor: lower_bound_norm}, and a union bound, we get that with probability at least $1- \delta$, 
\begin{align*}
&\left( \hatb_E\left(\cV_{\tau(E)}\right) - \beta^*\left(\cV_{\tau(E)}\right) \right)^\top \barX_{\tau(E)}\left(\cV_{\tau(E)}\right)^\top \barX_{\tau(E)}\left(\cV_{\tau(E)}\right) \left( \hatb_E\left(\cV_{\tau(E)}\right) - \beta^*\left(\cV_{\tau(E)}\right) \right)
\\&\geq 
\left(\frac{\lambda n}{2} - \kappa' d^2 \sqrt{\tau(E)\log(12d/\delta)}\right) \left\Vert \hatb_E\left(\cV_{\tau(E)}\right) - \beta^*\left(\cV_{\tau(E)}\right) \right\Vert_2^2,
\end{align*}
and 
\begin{align*}
&\left( \hatb_E\left(\cV_{\tau(E)}\right) - \beta^*\left(\cV_{\tau(E)}\right) \right)^\top \barX_{\tau(E)}\left(\cV_{\tau(E)}\right)^\top \varepsilon
\\&\leq \left\Vert  \hatb_E\left(\cV_{\tau(E)}\right) - \beta^*\left(\cV_{\tau(E)}\right) \right\Vert_2 \cdot K' \sqrt{d\tau(E) \log(4d/\delta)}.
\end{align*}
Combining the two above inequalities with the first-order conditions yields
\[
\left\Vert \hatb_E\left(\cV_{\tau(E)}\right) - \beta^*\left(\cV_{\tau(E)}\right) \right\Vert_2 \leq \frac{K' \sqrt{d\tau(E) \log(4d/\delta)}}{\frac{\lambda n}{2} - \kappa' d^2 \sqrt{\tau(E)\log(12d/\delta)}}. 
\]
For 
\[
n \geq \frac{4\kappa' d^2}{\lambda} \sqrt{\tau(E)\log(12d/\delta)},
\]
the bound becomes
\[
\left\Vert \hatb_E\left(\cV_{\tau(E)}\right) - \beta^*\left(\cV_{\tau(E)}\right) \right\Vert_2 \leq \frac{4K' \sqrt{d\tau(E) \log(4d/\delta)}}{\lambda n}.
\]
The proof concludes by letting $K \triangleq 4K'$, $\kappa \triangleq 4\kappa'$  and noting that since $\cD_{\tau(E)} \subset \cV_{\tau(E)}$ by construction, the statement holds true over $\cD_{\tau(E)}$ (projecting onto a subspace cannot increase the $\ell2$-norm).

\subsubsection{Proof of Lemma~\ref{lem: concentration}}\label{app: concentration}
For the first statement, note that for all $k \neq j \leq r$,
\[
\E\left[ f_k ^\top x_t x_t^\top f_j \right] = f_k ^\top \E\left[x_t x_t^\top \right] f_j = \lambda_j f_k^\top f_j ,
\]
as $f_j$ is (by definition) an eigenvector of $\Sigma = \E\left[x_t x_t^\top \right]$ for eigenvalue $\lambda_j$. Note that the $f_j^\top x_t x_t^\top f_k = (f_j^\top x_t)(f_k^\top x_t)$ are random variables that are independent across $t$. Further, by Cauchy-Schwarz, 
\[
\left\vert (f_k^\top x_t)(f_j^\top x_t) \right\vert \leq \Vert f_k \Vert_2 \Vert f_j \Vert_2 \Vert x_t\Vert_2^2 = \Vert x_t\Vert_2^2 \leq d.
\]
Therefore, we can apply Hoeffding with a union bound over the $r^2$ choices of $(f_k,f_j)$ to show that with probability at least $1-\delta'$,
\[
\left\vert \sum_{t=1}^{\tau(E)} f_k^\top x_t x_t^\top f_j - \lambda_j \tau(E) f_k^\top f_j \right\vert \leq d \sqrt{2 \tau(E) \log(2r^2/\delta')}.
\]
Note now that for all $z \in \Sigma$, we can write $z = \sum_{k=1}^r \left(z^\top f_k\right) f_k$, and as such 
\begin{align*}
&\left\vert \sum_{t=1}^{\tau(E)} z^\top x_t x_t^\top z - \sum_{k,j=1}^r (z^\top f_k) (z^\top f_j) \lambda_j \tau(E) f_k^\top f_j \right\vert 
\\&= \left\vert \sum_{t=1}^{\tau(E)} \sum_{k,j=1}^r (z^\top f_k) (z^\top f_j) f_k^\top x_t x_t^\top f_j - \sum_{k,j=1}^r (z^\top f_k) (z^\top f_j) \lambda_j \tau(E) f_k^\top f_j \right\vert 
\\&= \left\vert \sum_{k,j=1}^r (z^\top f_k) (z^\top f_j) \left(\sum_t f_k^\top x_t x_t^\top f_j -  \lambda_j \tau(E) f_k^\top f_j\right) \right\vert
\\&\leq d \sqrt{2 \tau(E) \log(2r^2/\delta')} \sum_{k,j=1}^r |z^\top f_k| |z^\top f_j|
\\&\leq rd \sqrt{2 \tau(E) \log(2r^2/\delta')} \Vert z \Vert_2^2,
\end{align*}
where the last step follows from the fact that by Cauchy-Schwarz, 
\[
\sum_{k=1}^r |z^\top f_k| \leq \sqrt{\sum_{k=1}^r 1^2} \sqrt{\sum_{k=1}^r (z^\top f_k)^2} = \sqrt{r} \Vert z \Vert_2.
\]
Hence, for $z \in \Sigma$, remembering $f_k^\top f_j = 0$ when $k \neq j$ and $f_k^\top f_k = 1$, and noting $\Vert z \Vert_2^2 =  \sum_{k=1}^r (z^\top f_k)^2$, we get that 
\begin{align*}
\sum_{t=1}^{\tau(E)} z^\top x_t x_t^\top z
&\geq \sum_{k,j=1}^r (z^\top f_k) (z^\top f_j) \lambda_j \tau(E) f_k^\top f_j - r d \sqrt{2\tau(E) \log(2r^2/\delta')} \Vert z \Vert_2^2
\\&=\sum_{k=1}^r \lambda_k \tau(E) (z^\top f_k)^2 - r d \sqrt{2\tau(E) \log(2r^2/\delta')} \Vert z \Vert_2^2
\\&\geq \lambda_r \tau(E) \sum_{k=1}^r (z^\top f_k)^2 - r d \sqrt{2\tau(E) \log(2r^2/\delta')} \Vert z \Vert_2^2
\\&= \left(\lambda_r \tau(E) - 2 r d \sqrt{\tau(E) \log(2r/\delta')} \right) \Vert z \Vert_2^2.
\end{align*}

For the second statement, we remind the reader that the costs of modification are such that $\left\vert \Delta_t(k)^2 \right\vert \leq \left(\max_{i,j} \left\{\frac{B^i}{c^i(j)}\right\}\right)^2$, and that within any epoch $\phi$, the $\Delta_t$'s are independent of each other. We can therefore apply Hoeffding's inequality and a union bound (over $k \in D_{\tau(E)} \subset [d]$) to show that with probability at least $1-\delta'$, for any $k \in D_{\tau(E)}$, there exists an epoch $\phi(k) \leq E$ (pick any $\phi$ in which $k$ is modified) such that
\begin{align*}
\sum_{t \in \phi(k)} e_k^\top \Delta_t \Delta_t^\top e_k 
&\geq n \E \left[\Delta_t(k)^2 \right] - \left(\max_{i,j} \left\{\frac{B^i}{c^i(j)}\right\}\right)^2 \sqrt{2n \log(d/\delta')}
\\&\geq n \min_{i \in [l],j \in [d]} \left\{ \pi^i \left(\frac{B^i}{c^i(j)}\right)^2 \right\} - \left(\max_{i,j} \left\{\frac{B^i}{c^i(j)}\right\}\right)^2 \sqrt{2n \log(d/\delta')}.
\end{align*}
The last inequality holds noting that $k$ can be modified in period $\phi(k)$ only if there exists a cost type $i$ on the support of $\C$ such that $k$ is a best response to $\hatb_{\phi(k) - 1}$; in turn, $k$ is modified with probability $\pi^i$ by amount $\Delta(k) = B^i/c^i(k)$, leading to
\[
\E \left[\Delta_t(k)^2 \right] \geq \pi^i \left(\frac{B^i}{c^i(k)}\right)^2. 
\]
Since $\Delta_t(k) \Delta_t(j) = 0$ when $k \neq j$ as a single direction is modified at a time, note that for all $z \in \cD_{\tau(E)}$, we have 
\begin{align*}
&\sum_{t \leq \tau(E)} z^\top \Delta_t \Delta_t^\top z
\\&= \sum_{t \leq \tau(E)} \sum_{k=1}^d \Delta_t(k)^2 z^\top e_k e_k^\top z
\\&= \sum_{k=1}^d \sum_{t \leq \tau(E)} \Delta_t(k)^2 (z^\top e_k)^2
\\&\geq \sum_{k \in D_{\tau(E)}}  \sum_{t \in \phi(k)}  \Delta_t(k)^2 (z^\top e_k)^2
\\&\geq \sum_{k \in D_{\tau(E)}} \left( n \min_{i \in [l],j \in [d]} \left\{ \pi^i \left(\frac{B^i}{c^i(j)}\right)^2 \right\} - \left(\max_{i,j} \left\{\frac{B^i}{c^i(j)}\right\}\right)^2 \sqrt{2n \log(d/\delta')}\right) (z^\top e_k)^2
\\&= \left( n \min_{i \in [l],j \in [d]} \left\{ \pi^i \left(\frac{B^i}{c^i(j)}\right)^2 \right\} - \left(\max_{i,j} \left\{\frac{B^i}{c^i(j)}\right\}\right)^2 \sqrt{2n \log(d/\delta')}\right)  \sum_{k \in D_{\tau(E)}} (z^\top e_k)^2.
\end{align*}
For $z \in \cD_{\tau(E)}$, $\sum_{k \in D_{\tau(E)}} (z^\top e_k)^2 = \Vert z \Vert_2^2$, and the second inequality immediately holds.

Finally, let us prove the last inequality. Take $(k,j) \in [d]^2$, and let us write $W_t = e_k^\top x_t \Delta_t^\top e_j$. First, note that $x_t$ and $\Delta_t$ are independent: in epoch $\phi$, the distribution of $\Delta_t$ is a function of $\hatb_{\phi-1}$ (and $\C$) only, which only depends on the realizations of $x,~\varepsilon,~\Delta$ in previous time steps. Further, $x_t$ is independent of the history of features and modifications up until time $t-1$ included. Hence, it must be the case that
\begin{align*}
\E \left[ W_t \middle|  W_{t-1},\ldots,W_1\right] 
& = \E \left[ \E\left[e_k^\top x_t \middle| \Delta_t, W_{t-1},\ldots, W_1\right]\Delta_t^\top e_j \middle|  W_{t-1},\ldots,W_1 \right]
\\&= \E \left[ \E\left[e_k^\top x_t\right]\Delta_t^\top e_j \middle|  W_{t-1},\ldots,W_1 \right]
\\&= \E \left[ e_k^\top x_t\right] 
\cdot \E \left[\Delta_t^\top e_j \middle| W_{t-1},\ldots,W_1 \right]
\\& = 0,
\end{align*}
where the last equality follows from the fact that $\E\left[x_t \right] = 0$. Further,
\[
\left\vert e_k^\top x_t \Delta_t^\top e_j \right\vert = \vert x_t(k)\vert \vert  \Delta_t(j) \vert \leq \max_{i,k} \left\{\frac{B^i}{c^i(k)}\right\}.
\]
We can therefore apply Lemma~\ref{lem: azuma} and a union bound over all $(k,j) \in [d]^2$ to show that with probability at least $1-\delta'$, 
\[
\left\vert \sum_{t=1}^{\tau(E)} e_k^\top x_t \Delta_t^\top e_j \right\vert \leq \max_{i,k} \left\{\frac{B^i}{c^i(k)}\right\} \sqrt{2\tau(E) \log(2d^2/\delta')}.
\]
In particular, we get that for all $z \in \reals^d$,
\begin{align*}
\left\vert 
\sum_{t \in E} z^\top x_t \Delta_t^\top z 
\right\vert
&= \left\vert 
\sum_{k,j}\sum_{t \in E} (z^\top e_k) (z^\top e_j) e_k^\top x_t \Delta_t^\top e_j 
\right\vert
\\&\leq  \sum_{k,j} |z^\top e_k| |z^\top e_j| \left\vert \sum_{t \in E} e_k^\top x_t \Delta_t^\top e_j\right\vert 
\\&\leq \max_{i,k} \left\{\frac{B^i}{c^i(k)}\right\} \sqrt{2\tau(E) \log(2d^2/\delta')} \left(\sum_{k} |z^\top e_k|\right)^2
\\&\leq 2 d \max_{i,k} \left\{\frac{B^i}{c^i(k)}\right\} \sqrt{\tau(E) \log(2d/\delta')} \Vert z \Vert_2^2,
\end{align*}
where the last step follows from the fact that by Cauchy-Schwarz,
\[
\left(\sum_{k} |z^\top e_k|\right)^2 = \left(\sum_{k} |z(k)| \right)^2 \leq \sum_{k} 1^2 \cdot \sum_{k} z(k)^2 = d \cdot \Vert z \Vert_2^2.
\]

We conclude the proof with a union bound over all three inequalities, taking $\delta' = 3 \delta$.

\section{Proof of Theorem~\ref{thm: tie-breaking}}\label{app: tie-breaking}

We drop the $\tau(E)$ subscripts when clear from context. We first note that $\hat{\beta}_E$ is a least-square solution. 

\begin{claim}\label{clm: is_LSE_solution}
\[
\hat{\beta}_E \in LSE(\tau(E)).
\]
\end{claim}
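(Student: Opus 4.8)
The plan is to exploit the affine structure of the least-square solution set together with the observation that Algorithm~\ref{alg: tie-breaking} only ever perturbs a genuine minimizer by a vector lying in the kernel of $\barX_{\tau(E)}$, which leaves the objective value unchanged. I would split along the two branches of the algorithm. In the full-rank branch, $\hatb_E$ is \emph{defined} to be the unique element of $LSE(\tau(E))$, so the claim holds trivially. All the work is in the rank-deficient branch, where $\hatb_E = \beta_E + \alpha v$ with $\beta_E$ a minimal-norm least-square solution and $v$ built from an orthonormal basis $B_{\tau(E)}^\bot$ of $\cU^\bot_{\tau(E)}$.

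First I would record the key linear-algebra fact: the least-square objective $\beta \mapsto (\barX_{\tau(E)}\beta - \barY_{\tau(E)})^\top(\barX_{\tau(E)}\beta - \barY_{\tau(E)})$ depends on $\beta$ only through the product $\barX_{\tau(E)}\beta$. Hence if $\beta_E \in LSE(\tau(E))$ and $w$ satisfies $\barX_{\tau(E)} w = 0$, then $\barX_{\tau(E)}(\beta_E + w) = \barX_{\tau(E)}\beta_E$, so the objective value at $\beta_E + w$ equals the minimal value attained at $\beta_E$, i.e.\ $\beta_E + w \in LSE(\tau(E))$. In words, $LSE(\tau(E))$ is closed under translation by $\ker(\barX_{\tau(E)})$. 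Next I would identify this kernel with $\cU^\bot_{\tau(E)}$: since each row of $\barX_{\tau(E)}$ is some $\barx_t^\top$, the equation $\barX_{\tau(E)} w = 0$ is equivalent to $\barx_t^\top w = 0$ for all $t \in [\tau(E)]$, which says exactly that $w$ is orthogonal to $\cU_{\tau(E)} = \spn(\barx_1,\ldots,\barx_{\tau(E)})$. Thus $\ker(\barX_{\tau(E)}) = \cU^\bot_{\tau(E)}$.

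To conclude, I would note that $v$ is a normalized sum of vectors in the orthonormal basis $B_{\tau(E)}^\bot$ of $\cU^\bot_{\tau(E)}$, so $v \in \cU^\bot_{\tau(E)}$ and therefore $\alpha v \in \ker(\barX_{\tau(E)})$; applying the closure property with $w = \alpha v$ gives $\hatb_E = \beta_E + \alpha v \in LSE(\tau(E))$. There is no substantive obstacle: the statement follows directly from the affine structure of least-square minimizers and the fact that the tie-breaking perturbation lives in the orthogonal complement of the observed feature span. The only point warranting care is making the row-space/kernel identification $\ker(\barX_{\tau(E)}) = \cU^\bot_{\tau(E)}$ explicit, so that one checks $v$ actually lies in the kernel of $\barX_{\tau(E)}$ (and not merely that it is orthogonal to the particular solution $\beta_E$).
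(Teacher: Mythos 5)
Your proof is correct and follows essentially the same route as the paper: both arguments reduce to the observation that $v \in \cU^\bot_{\tau(E)}$ implies $\barX_{\tau(E)} v = 0$, so the least-square objective at $\hatb_E = \beta_E + \alpha v$ equals its value at the minimizer $\beta_E$. Your version is simply more explicit (spelling out the kernel identification $\ker(\barX_{\tau(E)}) = \cU^\bot_{\tau(E)}$ and the trivial full-rank branch), which the paper compresses into one line.
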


\begin{proof}
This follows immediately from noting that 
\[
\left(\barX \hat{\beta}_E - \barY \right)^\top \left(\barX \hat{\beta}_E - \barY \right)
= \left(\barX \beta_E - \barY \right)^\top \left(\barX \beta_E - \barY \right),
\]
as $\barX^\top v = \barX(\cU)^\top v = 0$ by definition of $\cU$, and since $v \in \cU^\bot$.
\end{proof}

Second, we show that $\hat{\beta}_E$ has large norm:
\begin{claim}\label{clm: large_norm}
\[
\left\Vert \hat{\beta}_E \right\Vert_2 \geq \alpha.
\]
\end{claim}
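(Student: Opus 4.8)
The plan is to exploit the orthogonal decomposition $\hat{\beta}_E = \beta_E + \alpha v$ provided by Algorithm~\ref{alg: tie-breaking}, together with the fact that $v$ has unit norm by construction. If I can show that $\beta_E$ and $v$ are orthogonal, then a single application of the Pythagorean identity gives $\left\Vert \hat{\beta}_E \right\Vert_2^2 = \left\Vert \beta_E \right\Vert_2^2 + \alpha^2 \left\Vert v \right\Vert_2^2 = \left\Vert \beta_E \right\Vert_2^2 + \alpha^2 \geq \alpha^2$, from which the claim follows after taking square roots. So the entire argument reduces to establishing $\langle \beta_E, v \rangle = 0$.

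To do this I would locate $v$ and $\beta_E$ in complementary subspaces. By construction, $v$ is a (renormalized) sum of elements of an orthonormal basis of $\cU^\bot$, so $v \in \cU^\bot$. It therefore suffices to show that the minimal-norm least-square solution $\beta_E$ lies in $\cU$. I would argue this via the standard structure of the solution set of the least-square problem: writing $\ker \barX = \{ w : \barX w = 0 \}$, one has $\barX w = 0$ iff $\barx_t^\top w = 0$ for every $t \leq \tau(E)$, i.e. iff $w$ is orthogonal to $\spn(\barx_1,\ldots,\barx_{\tau(E)}) = \cU$; hence $\ker \barX = \cU^\bot$. Since $LSE(\tau(E))$ is an affine translate of $\ker \barX$, it equals $\beta_0 + \cU^\bot$ for any particular solution $\beta_0$, and the minimal-norm point of this affine subspace is the orthogonal projection of the origin onto it, which lies in $(\cU^\bot)^\bot = \cU$. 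Thus $\beta_E \in \cU$.

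Combining the two facts, $\beta_E \in \cU$ and $v \in \cU^\bot$ are orthogonal, which is exactly what the Pythagorean step needs. The only genuinely non-trivial ingredient is the characterization of the minimal-norm solution as lying in the row space $\cU$ of $\barX$; everything else is immediate from the construction in Algorithm~\ref{alg: tie-breaking} and the normalization $\left\Vert v \right\Vert_2 = 1$. I expect this step to be the main (if mild) obstacle, since it is the only place where a property of least-square minimizers beyond mere feasibility is used; it can be justified either by the affine-subspace / projection argument above or, equivalently, via the Moore--Penrose pseudoinverse representation $\beta_E = \barX^{+} \barY$, whose range is $\cU$.
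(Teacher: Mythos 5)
Your proposal is correct and takes essentially the same approach as the paper: both reduce the claim to showing $\beta_E \in \cU_{\tau(E)}$ and $v \in \cU_{\tau(E)}^\bot$, and then conclude via the Pythagorean identity $\Vert \hat{\beta}_E \Vert_2^2 = \Vert \beta_E \Vert_2^2 + \alpha^2 \Vert v \Vert_2^2 \geq \alpha^2$. The only cosmetic difference is in justifying $\beta_E \in \cU_{\tau(E)}$ — the paper argues by contradiction (projecting $\beta_E$ onto $\cU_{\tau(E)}$ preserves the least-squares objective while strictly decreasing the norm), whereas you invoke the affine structure $LSE(\tau(E)) = \beta_0 + \cU_{\tau(E)}^\bot$ together with the minimal-norm projection characterization; both arguments hinge on exactly the same norm-minimality of $\beta_E$.
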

\begin{proof}
First, we note that necessarily, $\beta_E \in \cU_{\tau(E)}$. Suppose not, then we can write 
\[
\beta_E = \beta_E\left(\cU_{\tau(E)}\right) + \beta_E\left(\cU_{\tau(E)}^\bot\right),
\]
with $\beta_E\left(\cU_{\tau(E)}^\bot\right) \neq 0$. By the same argument as in Claim~\ref{clm: is_LSE_solution}, $\beta_E\left(\cU_{\tau(E)}\right)$ is a least-square solution. Using orthogonality of $\cU_{\tau(E)}$ and $\cU_{\tau(E)}^\bot$ and the fact that $\left\Vert \beta_E\left(\cU_{\tau(E)}^\bot\right) \right\Vert_2 > 0$, we have
\[
\left\Vert \beta_E \right\Vert^2 =  \left\Vert \beta_E\left(\cU_{\tau(E)}\right) \right\Vert_2^2 + \left\Vert \beta_E\left(\cU_{\tau(E)}^\bot\right) \right\Vert_2^2 > \left\Vert \beta_E\left(\cU_{\tau(E)}\right) \right\Vert_2^2.
\]
This contradicts $\beta_E$ being a minimum norm least-square solution. Hence, it must be the case that $\beta_E \in \cU_{\tau(E)}$. Since $v \in \cU_{\tau(E)}^\bot$, we have that $\beta_E$ and $v$ are orthogonal with $\Vert v \Vert_2 = 1$, implying
\[
\left\Vert \hat{\beta}_E \right\Vert_2^2
= \left\Vert \beta_E \right\Vert_2^2 + \alpha^2 \Vert v \Vert_2^2
\geq \alpha^2.
\]
This concludes the proof.
\end{proof}

We argue that such a solution places a large amount of weight on currently unexplored features:
\begin{lemma} \label{lem: explore}
At time $\tau(E)$, suppose $rank\left(\cU_{\tau(E)}\right) \leq [d]$. Suppose $n \geq \frac{\kappa d^2}{\lambda} \sqrt{\tau(E) \log(12d/\delta')}$. Take any $\alpha$ with 
\[
\alpha \geq 
\gamma \left(\sqrt{d} + \frac{K d\sqrt{T \log(4d/\delta')}}{\lambda n}\right),
\]
where $\gamma$ is a constant that depends only on $\C$. With probability at least $1-\delta'$, there exists $i \in [l]$ and a feature $k \notin D_{\tau(E)}$ with
\[
\frac{\left\vert \hat{\beta}_E(k) \right\vert}{c^i(k)} > \frac{\left\vert \hat{\beta}_E(j) \right\vert}{c^i(j)},~\forall j \in D_{\tau(E)}.
\]
\end{lemma}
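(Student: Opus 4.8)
The plan is to leverage two established facts about the tie-broken estimate $\hat{\beta}_E = \beta_E + \alpha v$: that it is accurate on the already-explored features (Theorem~\ref{thm: accurate_recovery}, which applies since $\hat{\beta}_E \in LSE(\tau(E))$ by Claim~\ref{clm: is_LSE_solution}), and that it has large norm (Claim~\ref{clm: large_norm}, $\|\hat{\beta}_E\|_2 \ge \alpha$). Intuitively, accuracy caps the weight $\hat{\beta}_E$ can place on explored coordinates, so a large overall norm forces a large weight onto some unexplored coordinate; a sufficiently large such weight makes that feature beat all explored features in the best-response ratio for some cost type.

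Concretely, I would first invoke Theorem~\ref{thm: accurate_recovery} with failure probability $\delta'$. Under the hypothesis $n \ge \frac{\kappa d^2}{\lambda}\sqrt{\tau(E)\log(12d/\delta')}$, with probability at least $1-\delta'$ we obtain $\sqrt{\sum_{k \in D_{\tau(E)}}(\hat{\beta}_E(k)-\beta^*(k))^2} \le \rho$, where $\rho := K\sqrt{d\tau(E)\log(4d/\delta')}/(\lambda n)$. Since the $\ell_2$-norm dominates each coordinate and $\beta^* \in [-1,1]^d$, this gives $|\hat{\beta}_E(k)| \le 1+\rho$ for every $k \in D_{\tau(E)}$, and hence $\sum_{k \in D_{\tau(E)}}\hat{\beta}_E(k)^2 \le d(1+\rho)^2$. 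Combining this with $\|\hat{\beta}_E\|_2^2 \ge \alpha^2$ and splitting the squared norm across explored and unexplored coordinates yields
\[
\sum_{k \notin D_{\tau(E)}}\hat{\beta}_E(k)^2 \ge \alpha^2 - d(1+\rho)^2.
\]
Because $\alpha$ will be taken large enough that $\alpha^2 > d(1+\rho)^2$ (as verified in the final step), the right-hand side is strictly positive; this already forces $D_{\tau(E)} \neq [d]$ (otherwise the left sum is empty) and, since there are at most $d$ unexplored coordinates, produces a feature $k \notin D_{\tau(E)}$ with $|\hat{\beta}_E(k)| \ge \sqrt{(\alpha^2 - d(1+\rho)^2)/d}$.

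Finally, I would convert this large coordinate into ratio-dominance over the explored features. Let $\gamma_0 := \min_{i \in [l]} \max_m c^i(m)/\min_m c^i(m) \ge 1$, and let $i^*$ attain this minimum. For any $j \in D_{\tau(E)}$ we have $|\hat{\beta}_E(j)|/c^{i^*}(j) \le (1+\rho)/\min_m c^{i^*}(m)$, whereas $|\hat{\beta}_E(k)|/c^{i^*}(k) \ge |\hat{\beta}_E(k)|/\max_m c^{i^*}(m)$, so it suffices to guarantee $|\hat{\beta}_E(k)| > (1+\rho)\gamma_0$. By the coordinate bound above, this reduces to $\alpha > (1+\rho)\sqrt{d}\sqrt{1+\gamma_0^2}$. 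Since $\tau(E) \le T$ implies $\rho\sqrt{d} \le Kd\sqrt{T\log(4d/\delta')}/(\lambda n)$, choosing $\gamma$ to be a suitable multiple of $\sqrt{1+\gamma_0^2}$ (a constant depending only on $\C$, with a small slack to turn the $\ge$ into the strict $>$ of the statement) makes the hypothesized lower bound on $\alpha$ imply this condition, and picking $i = i^*$ completes the argument.

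I expect the main obstacle to be the middle step: arguing cleanly that the large norm, which is injected through $\alpha v$ with $v$ living abstractly in $\cU_{\tau(E)}^\bot$, must manifest as a large value on an actual unexplored \emph{coordinate} $k \notin D_{\tau(E)}$. This is precisely where the accuracy bound on $D_{\tau(E)}$ is indispensable, and where the bookkeeping between the subspace/projection picture used in Theorem~\ref{thm: accurate_recovery} and the coordinate-wise best-response condition must be carried out carefully; everything else is a matter of absorbing cost ratios into the instance-dependent constant $\gamma$.
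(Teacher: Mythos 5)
Your proposal is correct and follows essentially the same route as the paper's own proof: apply Theorem~\ref{thm: accurate_recovery} to the tie-broken estimate (legitimate since $\hat{\beta}_E \in LSE(\tau(E))$ by Claim~\ref{clm: is_LSE_solution}), combine the resulting cap on the explored coordinates with the norm lower bound of Claim~\ref{clm: large_norm} to extract a large unexplored coordinate, and absorb the cost ratios into the instance constant $\gamma$. The only differences are cosmetic bookkeeping choices (a Pythagorean split of the squared norm instead of the paper's triangle inequality on norms, and an equivalent formulation of the cost-ratio constant), so no further comparison is needed.
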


\begin{proof}
Since $\hatb_E \in LSE(\tau(E))$, it must be by Theorem~\ref{thm: accurate_recovery} that with probability at least $1-\delta'$,
\begin{align}\label{eq: bound_l2norm}
\begin{split}
\sqrt{\sum_{k \in D}\left( \hatb_E(k) - \beta^*(k) \right)^2}
&\leq \frac{K \sqrt{d\tau(E) \log(4d/\delta')}}{\lambda n}
\\&\leq \frac{K \sqrt{dT \log(4d/\delta')}}{\lambda n}.
\end{split}
\end{align}

First, since $z \to \sqrt{\sum_{k \in D}z(k)^2}$ defines a norm (in fact, the $\ell2$-norm in $\reals^{|D|}$), it must be the case that 
\[
\sqrt{\sum_{k \in D}(z(k)-z'(k))^2} \geq \sqrt{\sum_{k \in D}z(k)^2} - \sqrt{\sum_{k \in D}z'(k)^2}.
\]
In turn, plugging this in Equation~\eqref{eq: bound_l2norm}, we obtain
\begin{align*}
\sqrt{\sum_{k \in D} \hatb_E(k)^2 }
&\leq \sqrt{\sum_{k \in D} \beta^*(k)^2 } +  \frac{K \sqrt{dT \log(4d/\delta')}}{\lambda n}
\\&\leq \Vert \beta^* \Vert_2 +  \frac{K \sqrt{dT \log(4d/\delta')}}{\lambda n}
\\&\leq \sqrt{d} + \frac{K \sqrt{dT \log(4d/\delta')}}{\lambda n}.
\end{align*}
By the triangle inequality and the lemma's assumption, we also have that
\[
\sqrt{\sum_{k \in D} \hatb_E(k)^2} + \sqrt{\sum_{k \notin D}  \hatb_E(k)^2}
\geq \Vert \hatb_E \Vert_2 
\geq \alpha.
\]
Combining the last two equations, we obtain
\[
\sqrt{d} +  \frac{K \sqrt{dT \log(4d/\delta')}}{\lambda n} + \sqrt{\sum_{k \notin D}  \hatb_E(k)^2},
\geq \alpha
\]
which implies that for $\alpha \geq \gamma \left(\sqrt{d} + \frac{K d\sqrt{T \log(4d/\delta')}}{\lambda n}\right)$, we have:
\begin{align*}
\sqrt{\sum_{k \notin D}  \hatb_E(k)^2}
&\geq \alpha - \sqrt{d} - \frac{K \sqrt{dT \log(4d/\delta')}}{\lambda n}
\\&\geq \alpha - \sqrt{d} - \frac{K\sqrt{dT \log(4d/\delta')}}{\lambda n}
\\&\geq \sqrt{d} \left(\gamma-1\right)  \left(1 + \frac{K \sqrt{dT \log(4d/\delta')}}{\lambda n}\right).
\end{align*}

Second, note that Equation~\eqref{eq: bound_l2norm} implies immediately that for any $j \in D_T$, 
\[
\left\vert \hatb_E(j) - \beta^*(j) \right\vert
\leq \frac{K \sqrt{dT \log(4d/\delta')}}{\lambda n},
\]
and in turn,
\begin{align*}
\left\vert \hatb_E(j) \right\vert
\leq \left\vert \beta^*(j) \right\vert + \frac{K \sqrt{dT \log(4d/\delta')}}{\lambda n}
\leq 1 + \frac{K \sqrt{dT \log(4d/\delta')}}{\lambda n}.
\end{align*}
Therefore, 
\begin{align*}
\sqrt{\sum_{k \notin D}  \hatb_E(k)^2}
&\geq \sqrt{d} \left(\gamma - 1\right)\max_{j \in D} \hatb_E(j).  
\end{align*}
Hence, there must exist feature $k \not\in D$ with 
\[
\left \vert \hatb_E(k) \right \vert \geq (\gamma - 1)  \max_{j \in D} \hatb_E(j).  
\]
Picking $\gamma$ such that for some $i \in [l]$,
\[
\gamma - 1 \geq \max_{j \in D} \frac{c^i(k)}{c^i(j)}
\]
yields the result immediately. 
\end{proof}

The proof of Theorem~\ref{thm: tie-breaking} follows directly from Lemma~\ref{lem: explore} and a union bound over the first $d$ epochs. With probability at least $1-d \delta'$, for every epoch $E \in [d]$, there is a feature $k \notin D_{\tau(E)}$ such that for some $i \in [l]$,
\[
\frac{\left\vert \hat{\beta}_E(k) \right\vert}{c^i(k)} > \frac{\left\vert \hat{\beta}_E(j) \right\vert}{c^i(j)}~\forall j \in D_{\tau(E)}.
\]
This implies that there exists $k \in D_{\tau(E+1)}$ but $k \notin D_{\tau(E)}$. Applying this $d$ times, we have that if $T \geq d n$, necessarily $D_T = [d]$. We can then apply Theorem~\ref{thm: accurate_recovery} to then show that with probability at least $1-\delta'$
\begin{align*}
\left\Vert  \hatb_{T/n} - \beta^* \right\Vert_2 
\leq \frac{K \sqrt{d T \log(4d/\delta')}}{\lambda n}.
\end{align*}
Taking a union bound over the two above events and $\delta = 2d \delta'$, we get the theorem statement with probability at least $1 - \delta' \left(d + 1 \right) \geq 1-\delta$.

\end{document}